\definecolor{lightgray}{RGB}{242, 242, 242}
\crefname{assumption}{Assumption}{Assumptions}
\crefname{algorithm}{Algorithm}{Algorithms}
\crefname{informalassumption}{Informal Assumption}{Informal Assumptions}
\DeclareRobustCommand{\ie}{i.e.,\@\xspace}
\DeclareRobustCommand{\eg}{e.g.,\@\xspace}
\DeclareRobustCommand{\wrt}{w.r.t.\@\xspace}
\newcommand{\acdef}[1]{\textit{\acl{#1}} \textup{(\acs{#1})}\acused{#1}}		% for acro def
\declaretheorem[numberwithin=section]{thm}
\declaretheorem[sibling=thm]{theorem}
\declaretheorem[numberwithin=section]{assumption}
\declaretheorem[]{proof sketch}
\declaretheorem[]{definition}
\DeclareMathOperator*{\EV}{\mathbb{E}}
\newcommand{\A}{\mathcal{A}}
\newcommand{\supp}{\mathrm{supp}}
\DeclareMathOperator*{\argmax}{arg\,max}
\newcommand{\X}{\mathcal{X}}
\newcommand{\R}{\mathbb{R}}
\newcommand{\F}{\mathcal{F}}
\newcommand{\J}{\mathcal{J}}
\newcommand{\U}{\mathcal{U}}
\newcommand{\Q}{\mathcal{Q}}
\newcommand{\mypar}[1]{\textbf{#1.}}
\newcommand{\entropy}{\mathcal{H}}
\DeclareMathOperator{\mP}{\mathbb{P}}
\DeclareMathOperator{\mF}{\mathbb{F}}
\newcommand{\der}{\mathrm{d}}
\newcommand{\LinearFineTuningSolver}{\textsc{\small{LinearFineTuningSolver}}\xspace}
\newcommand{\noise}{U}
\newcommand{\bias}{b}
\newcommand{\hist}{\mathcal{G}}
\newcommand{\pMD}{p_\sharp}
\newcommand{\step}{\gamma}
\newcommand{\norm}[1]{\left\| #1 \right\|}
\newcommand{\AlgNameLong}{Score-based Maximum Entropy Manifold Exploration\xspace}
\newcommand{\AlgNameShort}{\textsc{\small{S-MEME}}\xspace}
\newcommand{\AlgNameDef}{\textbf{S}core-based \textbf{M}aximum \textbf{E}ntropy \textbf{M}anifold \textbf{E}xploration (\textsc{\small{S-MEME}}\xspace)}
\definecolor{myviolet}{rgb}{0.6, 0.4, 0.8}
\definecolor{mygreen}{rgb}{0.0, 0.5, 0.0}
\DeclareMathOperator{\dist}{dist}		% for distance
\definecolor{pastelblueold}{RGB}{56,146,236}
\definecolor{pastelblue}{RGB}{43,115,187}
\definecolor{pastelgreen}{RGB}{63,159,95}
\icmltitlerunning{Provable Maximum Entropy Manifold Exploration via Diffusion Models}
\begin{document}

\twocolumn[
\icmltitle{Provable Maximum Entropy Manifold Exploration via Diffusion Models}

% It is OKAY to include author information, even for blind
% submissions: the style file will automatically remove it for you
% unless you've provided the [accepted] option to the icml2025
% package.

% List of affiliations: The first argument should be a (short)
% identifier you will use later to specify author affiliations
% Academic affiliations should list Department, University, City, Region, Country
% Industry affiliations should list Company, City, Region, Country

% You can specify symbols, otherwise they are numbered in order.
% Ideally, you should not use this facility. Affiliations will be numbered
% in order of appearance and this is the preferred way.
\icmlsetsymbol{equal}{*}

\begin{icmlauthorlist}
\icmlauthor{Riccardo De Santi}{equal,eth,ethai}
\icmlauthor{Marin Vlastelica}{equal,eth,ethai}
\icmlauthor{Ya-Ping Hsieh}{eth}
\icmlauthor{Zebang Shen}{eth}
\icmlauthor{Niao He}{eth,ethai}
\icmlauthor{Andreas Krause}{eth,ethai}
%\icmlauthor{}{sch}
%\icmlauthor{}{sch}
\end{icmlauthorlist}

\icmlaffiliation{eth}{ETH Zurich, 8092 Zurich, Switzerland}
\icmlaffiliation{ethai}{ETH AI Center, Zurich, Switzerland}

\icmlcorrespondingauthor{Riccardo De Santi}{rdesanti@ethz.ch}

% You may provide any keywords that you
% find helpful for describing your paper; these are used to populate
% the "keywords" metadata in the PDF but will not be shown in the document
\icmlkeywords{Machine Learning, ICML}

\vskip 0.3in
]

% this must go after the closing bracket ] following \twocolumn[ ...

% This command actually creates the footnote in the first column
% listing the affiliations and the copyright notice.
% The command takes one argument, which is text to display at the start of the footnote.
% The \icmlEqualContribution command is standard text for equal contribution.
% Remove it (just {}) if you do not need this facility.

%\printAffiliationsAndNotice{}  % leave blank if no need to mention equal contribution
\printAffiliationsAndNotice{\icmlEqualContribution} % otherwise use the standard text.

\begin{abstract}
\looseness -1 Exploration is critical for solving real-world decision-making problems such as scientific discovery, where the objective is to generate truly novel designs rather than mimic existing data distributions.  In this work, we address the challenge of leveraging the representational power of generative models for exploration without relying on explicit uncertainty quantification. We introduce a novel framework that casts exploration as entropy maximization over the approximate data manifold implicitly defined by a pre-trained diffusion model. Then, we present a novel principle for exploration based on density estimation, a problem well-known to be challenging in practice. To overcome this issue and render this method truly scalable, we leverage a fundamental connection between the entropy of the density induced by a diffusion model and its score function. Building on this, we develop an algorithm based on mirror descent that solves the exploration problem as sequential fine-tuning of a pre-trained diffusion model. We prove its convergence to the optimal exploratory diffusion model under realistic assumptions by leveraging recent understanding of mirror flows. Finally, we empirically evaluate our approach on both synthetic and high-dimensional text-to-image diffusion, demonstrating promising results.
\end{abstract}

\section{Introduction} 
\label{sec:introduction}
Recent progress in generative modeling, particularly the emergence of diffusion models~\citep{sohl2015deep, song2019generative, ho2020denoising}, has achieved unprecedented success in generating high-quality samples across diverse domains, including chemistry~\citep{hoogeboom2022equivariant}, biology~\citep{corso2022diffdock}, and robotics~\citep{chi2023diffusion}. Traditionally, generative models have been employed to capture the underlying data distribution in high-dimensional spaces, facilitating processes such as molecule generation or material synthesis~\citep{bilodeau2022generative, zeni2023mattergen}.  However, simply approximating the data distribution is insufficient for real-world discovery, where exploration beyond high (data) density regions is essential. 

\begin{figure}[t]
    \centering
    \includegraphics[width=0.33\textwidth]{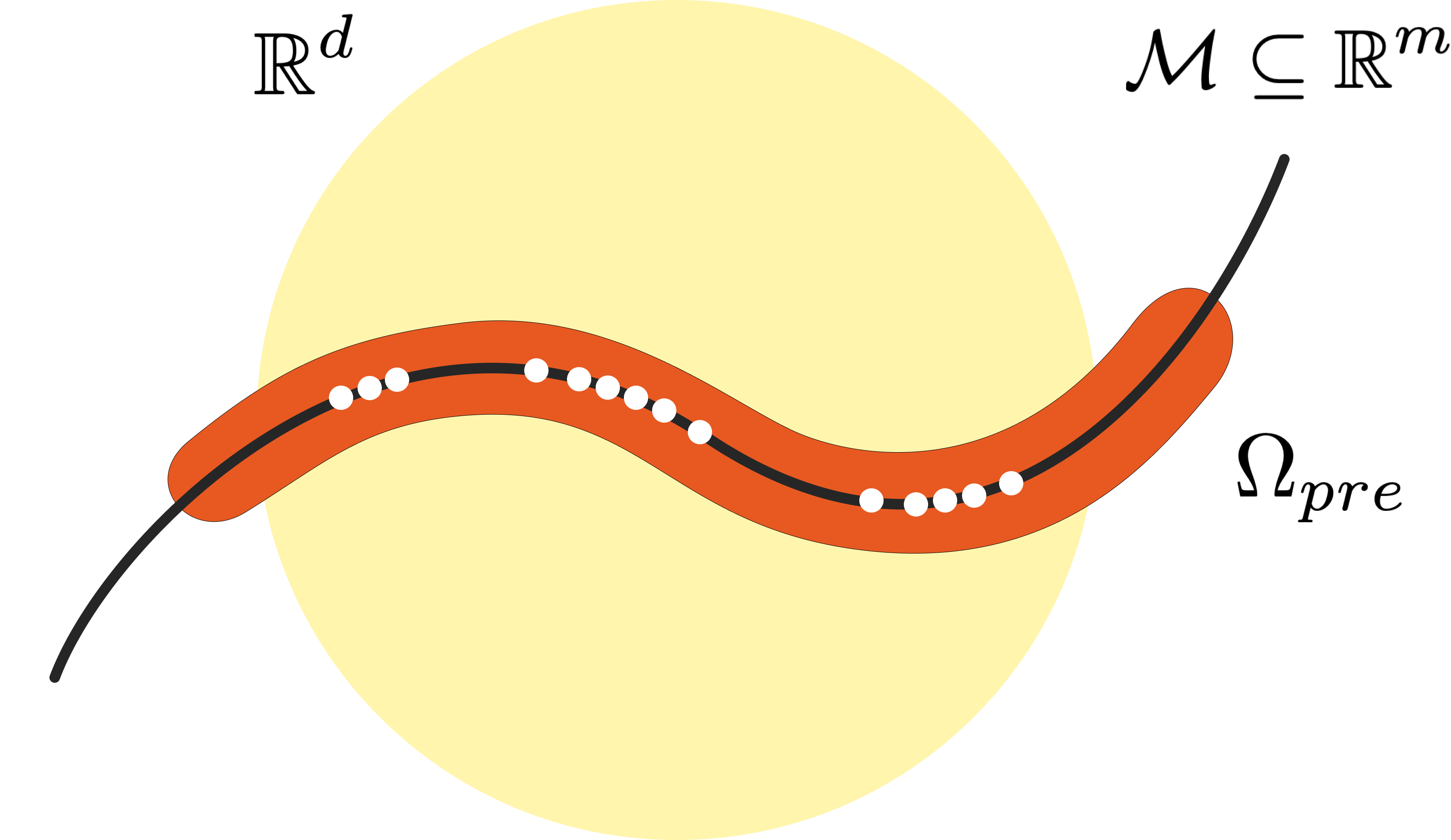}
    \caption{\looseness -1 A diffusion model $\pi^{pre}$ pre-trained on a set of points (white) implicitly learns a set  $\Omega_{pre}$ (orange) approximating the true low-dimensional data manifold $\mathcal{M} \subseteq \R^m$ (black) with $m \ll d $. The approximate data manifold $\Omega_{pre}$ can be significantly smaller than $\R^d$ (yellow).} \vspace{-0mm}
    \label{fig:low_dim_manifold}
\end{figure}
 
Nonetheless, as illustrated in Figure \ref{fig:low_dim_manifold}, these models excel at capturing complex data manifolds that are often significantly lower-dimensional than the ambient space~\citep{stanczukdiffusion, kamkari2024geometric, chen2023score}, and can synthesize realistic novel samples that satisfy intricate constraints (\eg valid drug molecules or materials). Yet, when the goal shifts to exploring novel regions within that manifold, a fundamental question remains: 
\vspace{-0mm}
\begin{center} 
\emph{How can we leverage the representational power of generative models to guide exploration?} \vspace{-0mm}
\end{center}
\paragraph{{Our approach}}
In this work, we tackle this challenge by first introducing the \emph{maximum entropy manifold exploration} problem (\cref{sec:problem_setting}). This involves learning a continuous-time reinforcement learning policy~\citep{doya2000reinforcement, zhao2024scores} that governs a new diffusion model to optimally explore the approximate data manifold implicitly captured by a pre-trained model. To this end, we present a theoretically grounded algorithmic principle that enables self-guided exploration via a diffusion model's own representational power of the density it induces (\cref{sec:principle}). This turns exploration into density estimation, a task well-known to be challenging in high-dimensional real-world settings~\citep{song2020score,kingma2021variational, skreta2024superposition}. To overcome this obstacle and render the method proposed truly scalable, we leverage a fundamental connection between the entropy of the density induced by a diffusion model and its score function.
Building on this, we propose a practical algorithm that performs manifold exploration through sequential fine-tuning of the pre-trained model (\cref{sec:algorithm}). We provide theoretical convergence guarantees for optimal exploration in a simplified illustrative setting by interpreting the algorithm proposed as a mirror descent scheme~\citep{nemirovskij1983problem, lu2018relatively} (\cref{sec:theory1}), and then generalize the analysis to realistic settings building on recent understanding of mirror flows~\citep{hsieh2019finding} (\cref{sec:theory2}). Finally, we provide an experimental evaluation of the proposed method, demonstrating its practical relevance on both synthetic and high-dimensional image data, where we leverage a pre-trained text-to-image diffusion model (\cref{sec:experiments}). \vspace{-0mm}
\paragraph{{Our contributions}} To sum up, in this work we present the following contributions:
\begin{itemize}[noitemsep,topsep=0pt,parsep=0pt,partopsep=0pt,leftmargin=*]
    \item The maximum entropy manifold exploration problem, that captures the goal of exploration over the approximate data manifold implicitly represented by a pre-trained diffusion model (\cref{sec:problem_setting})
    \item A scalable algorithmic principle for manifold exploration that leverages the representational power of a pre-trained diffusion model (\cref{sec:principle}), and a theoretically grounded algorithm based on sequential fine-tuning (\cref{sec:algorithm}).
    \item Convergence guarantees for the algorithm presented both under simplified and realistic assumptions leveraging recent understanding of mirror flows (Sections \ref{sec:theory1} and \ref{sec:theory2}).
    \item An experimental evaluation of the proposed method showcasing its practical relevance on both an illustrative task and a high-dimensional text-to-image setting (\cref{sec:experiments}).
\end{itemize}

\begin{figure*}[htbp]
    \centering
    \begin{subfigure}{0.64\textwidth} % 2/3 width
        \centering
        \includegraphics[width=\textwidth, height=0.34\textwidth, keepaspectratio]{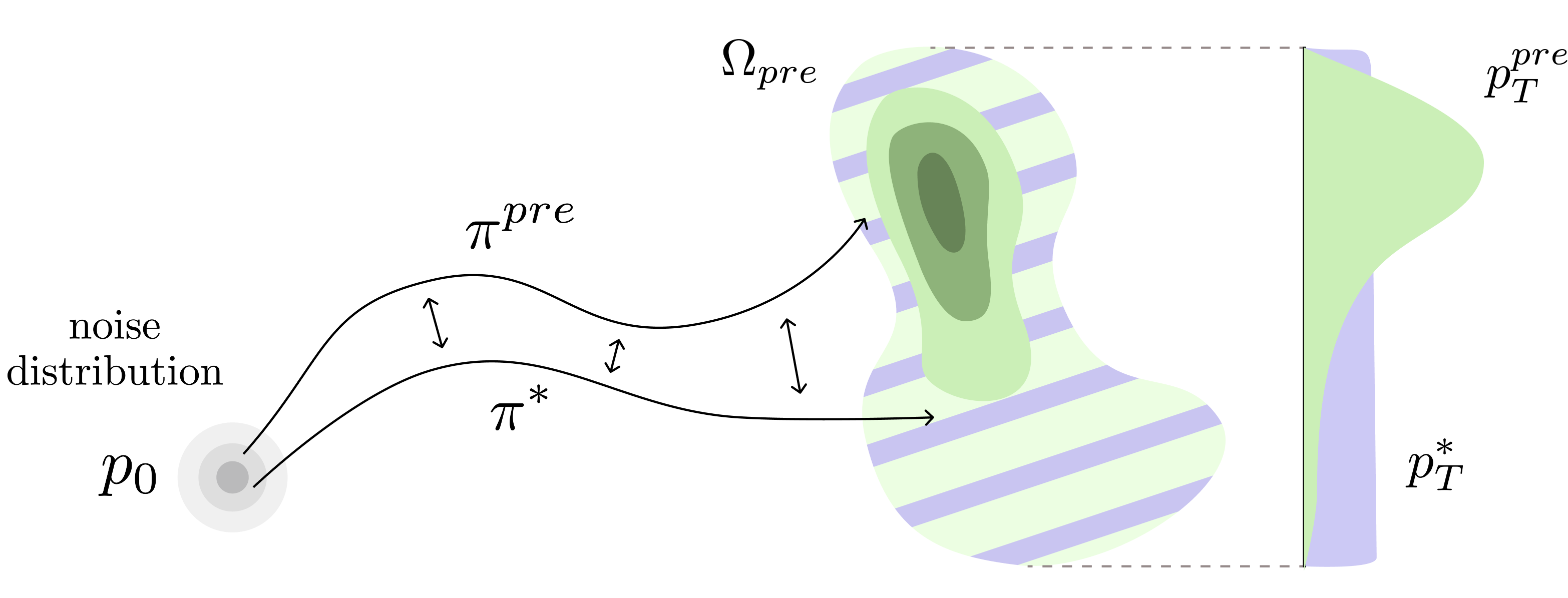}
        \caption{Pre-trained and fine-tuned diffusion model processes.}
        \label{fig:problem_processes}
    \end{subfigure}%
    \hspace{17pt}
    \begin{subfigure}{0.32\textwidth} % 1/3 width
        \centering
        \includegraphics[width=\textwidth, height=0.34\textwidth, keepaspectratio]{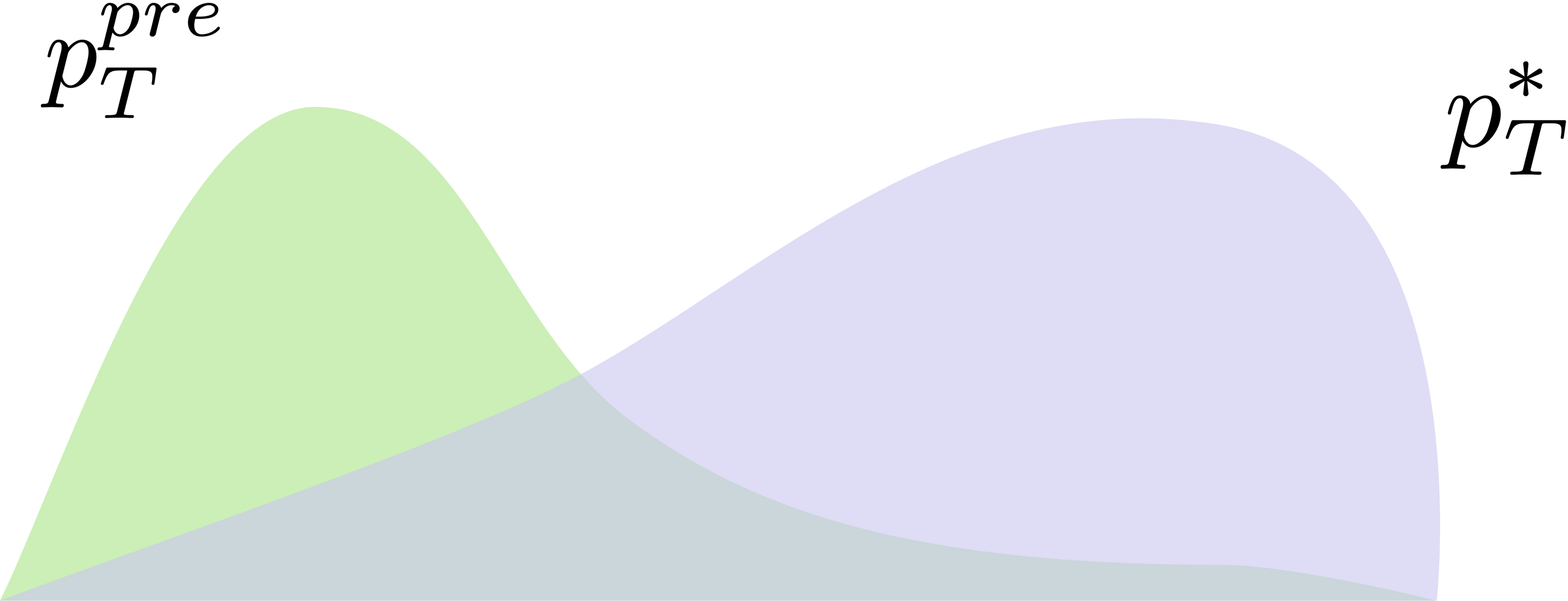}
        \caption{Regularized surprise maximization}
        \label{fig:surprise_maximization}
    \end{subfigure}\vspace{-0pt}
    \caption{\looseness -1 (\ref{fig:problem_processes}) Diffusion processes and marginal densities corresponding to the pre-trained model $\pi^{pre}$ (green), and maximally explorative fine-tuned model $\pi^*$ (violet). (\ref{fig:surprise_maximization}) Fine-tuning a pre-trained diffusion model (green) via the surprise maximization principle in Eq. \eqref{eq:entropy_first_variation} one obtains a diffusion model (violet) able to sample low-density regions.}
    \label{fig:top_figure} \vspace{-0pt}
\end{figure*}

\section{Background and Notation}
\label{sec:background}
\mypar{General Notation}
We denote with $\X \subseteq \R^d$ an arbitrary set. Then, we indicate the set of Borel probability measures on $\X$ with $\mP(\X)$, and the set of functionals over the set of probability measures $\mP(\X)$ as $\mF(\X)$. We write $\der \mu = \rho \der x$ to express that the density function of $\mu \in \mP(\X)$ with respect to the Lebesgue measure is $\rho$. Along this work, all integrals without an explicit measure are interpreted \wrt the Lebesgue measure. Given an integer $N$, we define $[N] \coloneqq \{1, \ldots, N\}$. Moreover, for two densities $\mu, \nu \in \mP(\X)$, we denote with $D_{KL}(\mu, \nu)$ the forward Kullback–Leibler divergence between $\mu$ and $\nu$. Ultimately, we denote by $\U[0,a]$ the uniform density over the bounded set $[0,a]$ with $a \in \R_+$.

\mypar{Continuous-time diffusion models}
Diffusion models (DMs) are deep generative models that approximately sample a complex data distribution by learning from observations a dynamical system to map noise to novel valid data points~\citep{song2019generative}. First, we introduce a forward stochastic differential equation (SDE) transforming to noise data points sampled from the data distribution $p_{data}$ :
\begin{equation}
    \der X_t = f(X_t, t) \der t + g(t) \der B_t \text{ with } X_0 \sim p_{data}  \label{eq:forward_process}
\end{equation}
where $X_t \in \R^d$ represents a $d$-dimensional point, $(B_t, t \geq 0)$ is $d$-dimensional Brownian motion, $f: \R_+ \times \R^d \to \R^d$ is a drift coefficient, and $g: \R_+ \to  \R_+$ is a diffusion coefficient. We denote with $p_t$ the marginal density at time $t$. Given a time horizon $t > 0$, one can sample $X_t \sim p_t$ by running the forward SDE in Equation \eqref{eq:forward_process}. We denote the time-reversal process by $X^{rev}_t \coloneqq X_{T-t}$ for $0 \leq t \leq T$, following the backward SDE:
\begin{equation}
    \der X^{rev}_t = f^{rev}(X^{rev}_t, T-t) \der t + \eta g(T-t) \der B_t \label{eq:notation_backward_SDE}
\end{equation}
with $f^{rev}(X^{rev}_t, T-t)$ corresponding to:
\begin{equation*}
    -f(X^{rev}_t, T-t) + \frac{1 + \eta^2}{2}g^2(T-t)\nabla_x \log p_{T-t}(X^{rev})
\end{equation*}
where $\nabla_x \log p_t(x)$ is the score function, and $\eta \in [0,1]$. By following the backward SDE in Equation \eqref{eq:notation_backward_SDE} from $X_T \sim p_T$, after $T$ steps one obtains $X_0 = X^{rev}_T \sim p_0 = p_{data}$. In practice, $p_T$ is replaced by a fixed and data-independent noise distribution $p_\infty \approx p_T$ for large $T$, typically a Gaussian\footnote{In the following, we will choose $p_\infty$ to be a truncated Gaussian for the sake of theoretical analysis.}, motivated by an asymptotic analysis of certain diffusion dynamics \citep{tang2024contractive}.

\mypar{Score matching and generation}
Since the score function $\nabla_x \log p_t(x)$ is unknown, it is typically approximated by a neural network $s_\theta(x,t)$ learned by minimizing the MSE at points sampled according to the forward process, namely:
\begin{equation*}
    \J(\theta) \coloneqq \EV_{t \sim \U[0,T]} \EV_{x \sim p_t} \left[\omega(t) \| s_\theta(x,t) - \nabla_x \log p_t(x) \|_2^2 \right] 
\end{equation*}
\looseness -1 where $\omega: [0,T] \to \R_{>0}$ is a weighting function. Crucially, this is equivalent to the denoising score matching objective \citep{vincent2011connection} consisting in estimating a minimizer $\theta^*$ of:
\begin{equation*}
    \EV_{t \sim \U[0,T]} \omega(t) \EV_{x_0 \sim p_0} \EV_{x_t | x_0} \| s_\theta(x_t,t) - \nabla_{x_t} \log p_t(x_t | x_0) \|_2^2 ]
\end{equation*}
\looseness -1 where $p_t(\cdot \mid x_0)$ is the conditional distribution of $x_t$ given an initial sample $x_0 \sim p_0$, which has a closed-form for typical diffusion dynamics. Once an approximate score function $s_{\theta^*}$ is learned, it can generate novel points approximately sampled from the data distribution. This is achieved by sampling an initial noise sample $X_0^{\leftarrow} \sim p_\infty$ and following the backward SDE in Equation \eqref{eq:notation_backward_SDE}, replacing the true score $\nabla_x \log p_t(x)$ with $s_{\theta^*}$, leading to the process $\{ X_t^{\leftarrow}\}_{t\in [0,T]}$. 
Next, we introduce a framework that we will leverage to fine-tune a pre-trained diffusion model.
    
\mypar{Continuous-time reinforcement learning}
We formulate finite-horizon continuous-time reinforcement learning (RL) as a specific class of stochastic control problems \citep{wang2020reinforcement, jia2022policy, zhao2024scores}. Given a state space $\X$ and an action space $\A$, we consider the transition dynamics governed by the following diffusion process, where we invert the direction of the time variable:
\begin{equation} \label{eqn_continuous_RL}
    \der \overline{X}_t = b(\overline{X}_t, t, a_t) \der t + \sigma(t) \der B_t \text{  with  } \overline{X}_0 \sim \mu
\end{equation}
\looseness -1 where $\mu \in \mP(\X)$ is an initial state distribution, $(B_t, t \geq 0)$ is $d$-dimensional Brownian motion, $b: \X \times \A \to \R^d$ is the drift coefficient, $\sigma: [0,T] \to \R_+$ is the diffusion coefficient, and $a_t \in \A$ is a selected action. In the following, we consider a state space $\X \coloneqq \R^d \times [0,T]$, and denote by (Markovian) policy a function $\pi(X_t, t) \in \mP(\A)$ mapping a state $(x,t) \in \X$ to a density over the action space $\A$, and denote with $p_t^\pi$ the marginal density at time $t$ induced by policy $\pi$. In particular, we will consider deterministic policies so that $a_t = \pi(X_t, t)$.  

\mypar{Pre-trained diffusion model as an RL policy}
A pre-trained diffusion model with score function $s^{pre}$ can be interpreted as an action process $a_t^{pre} \coloneqq s^{pre}(X_t^{\leftarrow}, T-t)$, where $a_t^{pre}$ is sampled from a continuous-time RL policy $a_t^{pre} \sim \pi^{pre}$. As a consequence, we can express the backward SDE induced by the pre-trained score $s^{pre}$ as follows:
\begin{equation} 
    \der X_t^{\leftarrow} = b(X_t^{\leftarrow}, t, a_t^{pre}) \der t + \eta \sigma (t) \der B_t \label{eq:generative_SDE}
\end{equation}
where we define $b(x, t, a) \coloneqq -f(x, T-t) + \frac{1+\eta^2}{2} g^2(T-t) \cdot a$ and $\sigma (t) = g(T-t)$ \citep{zhao2024scores}. In the following, we denote the pre-trained diffusion model by its (implicit) policy $\pi^{pre}$, which induces a marginal density $p^{pre}_T \coloneqq p^{\pi^{pre}}_T$ approximating the data distribution $p_{data}$.

\section{Problem Setting: Maximum Entropy Manifold Exploration}
\label{sec:problem_setting}
In this work, we aim to fine-tune a pre-trained diffusion model $\pi^{pre}$ to obtain a new model $\pi^*$, inducing a process:
\begin{equation}
    \der \overline{X}_t = b(\overline{X}_t , t, a_t^*) \der t + \eta \sigma (t) \der B_t \text{ with } a^*_t \sim \pi^*_t \label{eq:RL_diffusion_process}
\end{equation}
that rather than imitating the data distribution $p_{data}$ aims to induce a marginal state distribution $p^{\pi^*}_T$ that maximally explores the \emph{approximate data manifold} $\Omega_{pre}$ defined as:
\begin{equation}
    \Omega_{pre} = \supp (p_T^{pre}) \label{eq:support_def}
\end{equation}
\looseness -1 Formally, we pose the exploration problem as optimization of an entropy functional over the space of marginal distributions $p_T^\pi$ supported over the approximate data manifold, as shown in Figure \ref{fig:problem_processes}. Crucially, $\Omega_{pre}$, which is typically a complex set, \eg a molecular space, is defined only implicitly via the pre-trained diffusion model $\pi^{pre}$ as expressed in Equation \eqref{eq:support_def}. Formally, we state the exploration problem as follows.
\begin{tcolorbox}[colframe=white!, top=2pt,left=2pt,right=2pt,bottom=2pt]
\begin{center}
\textbf{Maximum Entropy Manifold Exploration}
\begin{align}
    &\argmax_{\pi}\quad  \entropy \left( p^{\pi}_T \right) \label{eq:opt_problem}\\ 
    &s.t. \quad p^{\pi}_T \in \mP(\Omega_{pre}) \notag
\end{align}
\end{center}
\end{tcolorbox}
In this formulation, $\entropy \in \mF(\Omega_{pre})$ denotes the differential entropy functional quantifying exploration, expressed as: 
\begin{equation}
 \entropy(\mu) = - \int \der \mu \log \frac{\der \mu}{dx}, \quad \mu \in \mP(\Omega_{pre}) \label{eq:entropy_functional_def}
\end{equation}
For this objective to be well defined, \ie the maximum is achieved by some measure $\mu \in \mP(\Omega_{pre})$, a sufficient condition is stated in the following and proved in Appendix \ref{sec:compactness}.

\begin{restatable}[$\Omega_{pre}$ is compact]{proposition}{omegaIsCompact}
\label{proposition:omega_is_compact}
\looseness -1 Suppose that $s^{pre}$ is Lipschitz and the noise distribution $p_0$ is chosen as the truncated Gaussian. Then $\Omega_{pre}$ spanned by an ODE sampler is compact.
\end{restatable}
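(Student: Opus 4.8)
The plan is to realize $\Omega_{pre}$ as the image of a compact set under a continuous map, so that compactness follows from the elementary fact that continuous images of compact sets are compact. Concretely, the ODE sampler generates $p_T^{pre}$ as the law of $X_T^{\leftarrow}$, where $\{X_t^{\leftarrow}\}_{t \in [0,T]}$ solves the probability flow ODE obtained by setting $\eta = 0$ in Equation \eqref{eq:generative_SDE}, i.e. $\der X_t^{\leftarrow} = v(X_t^{\leftarrow}, t)\, \der t$ with velocity field
\begin{equation*}
  v(x,t) \coloneqq -f(x, T-t) + \tfrac{1}{2} g^2(T-t)\, s^{pre}(x, T-t),
\end{equation*}
initialized at $X_0^{\leftarrow} \sim p_0$, the truncated Gaussian. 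Denoting by $\Phi_T$ the associated time-$T$ flow map $X_0^{\leftarrow} \mapsto X_T^{\leftarrow}$, the generated marginal is the pushforward $p_T^{pre} = (\Phi_T)_\# p_0$, and the claim reduces to showing that $\supp\big((\Phi_T)_\# p_0\big)$ is compact.

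First I would establish that $\Phi_T$ is well-defined and continuous. By the standing regularity of the forward dynamics ($f$ Lipschitz in $x$ and $g$ bounded on $[0,T]$) together with the hypothesis that $s^{pre}$ is Lipschitz, the field $v(\cdot, t)$ is Lipschitz in $x$ with a constant uniform over $t \in [0,T]$. The Cauchy--Lipschitz (Picard--Lindel\"of) theorem then yields existence and uniqueness of the ODE trajectory from every initial condition on $[0,T]$, so $\Phi_T$ is a well-defined map on $\R^d$. Continuity --- in fact Lipschitz continuity --- of $\Phi_T$ follows from continuous dependence on initial conditions: comparing two trajectories and applying Gr\"onwall's inequality gives $\norm{\Phi_T(x) - \Phi_T(y)} \le e^{LT}\norm{x - y}$, with $L$ the uniform Lipschitz constant of $v$.

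Next I would invoke compactness of the noise support. A truncated Gaussian is supported on a bounded closed set $K \coloneqq \supp(p_0) \subseteq \R^d$ (a ball or box), which is compact. Since $\Phi_T$ is continuous, $\Phi_T(K)$ is compact, hence closed. Finally, for a continuous pushforward the support satisfies $\supp\big((\Phi_T)_\# p_0\big) = \overline{\Phi_T(\supp p_0)} = \overline{\Phi_T(K)} = \Phi_T(K)$, the last equality holding because $\Phi_T(K)$ is already closed. Therefore $\Omega_{pre} = \Phi_T(K)$ is compact, as claimed.

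The main obstacle is the first step: ensuring the velocity field $v$ is (globally) Lipschitz, so that the flow is well-defined on the whole interval $[0,T]$ without finite-time blow-up. The delicate point is the behaviour of $g^2(T-t)$ and of the score near the noising end ($t \to 0$), where diffusion schedules can be large and the true score $\nabla_x \log p_t$ may be steep; the Lipschitz hypothesis on the learned $s^{pre}$ is precisely what tames this. I would make explicit the mild boundedness/regularity assumptions on $f$ and $g$ (satisfied by standard VP/VE schedules on a finite horizon) needed to obtain a time-uniform Lipschitz constant. Once these are in place, the remaining arguments are standard.
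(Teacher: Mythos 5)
Your proposal is correct and takes essentially the same route as the paper's proof: both observe that the probability-flow ODE has a velocity field Lipschitz in $x$ (the paper uses that $f$ is linear in $x$, you assume it Lipschitz), conclude that the time-$T$ flow map is Lipschitz, and push forward the compact support of the truncated Gaussian to get compactness of $\Omega_{pre}$. Your write-up simply makes explicit the standard details the paper leaves implicit, namely Picard--Lindel\"of well-posedness, the Gr\"onwall estimate, and the identity $\supp\bigl((\Phi_T)_\# p_0\bigr) = \overline{\Phi_T(\supp p_0)}$.
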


\looseness -1 Notice that the assumptions in Proposition \ref{proposition:omega_is_compact} are standard for analysis of diffusion processes, \citep[\eg][]{lee2022convergence, pidstrigach2022score}, and not limiting in practice. Proposition \ref{proposition:omega_is_compact} implies that $\Omega_{pre}$ is a bounded subset of $\R^d$, which according to the manifold hypothesis approximates a lower-dimensional data manifold ~\citep{li2024diffusion, chen2023score, stanczukdiffusion, kamkari2024geometric}, as illustrated in Figure \ref{fig:low_dim_manifold}.

Crucially, both the constraint set $\mP(\Omega_{pre})$ and the marginal density $p_T^\pi$ in Problem \ref{eq:opt_problem} are \emph{never represented explicitly}, but only implicitly as functions of the pre-trained policy $\pi^{pre}$ and of the new policy $\pi$ respectively. 

\looseness -1 In the rest of this work, we show that Problem \eqref{eq:opt_problem} can be solved by fine-tuning the initial pre-trained model with respect to rewards obtained by sequentially linearizing the entropy functional. Towards this goal, in the next section, we introduce a scalable algorithmic principle that guides exploration by leveraging the representational capacity of the pre-trained diffusion model.

\section{A Principle for Scalable Exploration}
\label{sec:principle}
\looseness -1 As a first step towards tackling the maximum entropy manifold exploration problem in Equation \eqref{eq:opt_problem}, we introduce a principle for exploration corresponding to a specific (intrinsic) reward function for fine-tuning. 

\looseness -1 To this end, we define the first variation of a functional over a space of probability measures~\cite{hsieh2019finding}. A functional $\F \in \mF(\X)$, where $\F: \mP(\X) \to \R$, has first variation at $\mu \in \mP(\X)$ if there exists a function $\delta \F(\mu) \in \mF(\X)$ such that for all $\mu' \in \mP(\X)$ it holds that:
\begin{equation*}
    \F(\mu + \epsilon \mu') = \F(\mu) + \epsilon \langle \mu', \delta \F(\mu) \rangle + o(\epsilon).
\end{equation*}
where the inner product is interpreted as an expectation.

We can now present the following exploration principle as KL-regularized fine-tuning of the pre-trained model to maximize the entropy first variation evaluated at $p_T^{pre}$.
\begin{tcolorbox}[colframe=white!, top=2pt,left=2pt,right=2pt,bottom=2pt]

\begin{center}
\textbf{Regularized Entropy First Variation Maximization}
\begin{equation}
    \argmax_{\pi}\quad  \langle \delta \entropy \left( p^{pre}_T \right), p_T^\pi \rangle - \alpha D_{KL}(p_T^{\pi}, p_T^{pre})\label{eq:entropy_first_variation}
\end{equation}
\end{center}
\end{tcolorbox}\vspace{-0mm}
\subsection{Generative exploration via density estimation}\vspace{-0mm}
\looseness -1 Crucially, this algorithmic principle does not rely on explicit uncertainty quantification and uses the generative model's ability to represent the density $p^{pre}_T$ to direct exploration. 
By introducing a function $f: \X \to \R$ defined for all $x \in \X$ as:\vspace{-0mm}
\begin{equation}
    f(x) \coloneqq \delta \entropy \left( p^{pre}_T \right) (x) = - \log \left( p^{pre}_T \right) (x) \label{eq:f_from_entropy_linearized} \vspace{-0.5mm}
\end{equation}
the exploration principle in Equation \eqref{eq:entropy_first_variation} computes a policy $\pi^*$ inducing $p^{\pi^*}_T$ with high density in regions where $p^{pre}_T$ has low density due to limited pre-training samples. Moreover, the KL regularization in Equation \eqref{eq:entropy_first_variation} implicitly enforces $p_T^{\pi^*}$ to lie on the approximate data manifold $\Omega_{pre}$. Formally, we have that:\vspace{-0mm}
\begin{equation}
    \Omega_{\pi^*} \coloneqq \supp(p_T^{\pi^*}) \subseteq \supp(p^{pre}_T) = \Omega_{pre} \quad \forall \alpha>0 \vspace{-0mm}
\end{equation}

\looseness -1 The entropy first variation in Equation \eqref{eq:f_from_entropy_linearized} can be interpreted as a measure of \emph{surprise}, while the entropy functional in Equation \eqref{eq:opt_problem} as expected surprise \citep{achiam2017surprise}.
\subsection{Easy to optimize, but hard to estimate density}\vspace{-0mm}
Existing fine-tuning methods for diffusion models can only optimize linear functionals of $p_T^\pi$, namely $\mathcal{L}(\mu) = \langle f, \mu \rangle \in \mF(\X)$, since they can be represented as classic (reward) functions $f: \X \to \R$, defined over the design space $\X$, \eg space of molecules. Although the entropy functional $\entropy$ in Equation \eqref{eq:opt_problem} is non-linear with respect to $p_T^\pi$, its first variation is a linear functional. As a consequence, by rewriting it as shown in Equation \eqref{eq:f_from_entropy_linearized}, it can be optimized using existing fine-tuning methods for classic reward functions via stochastic optimal control schemes~\citep[\eg][]{uehara2024feedback, domingo2024adjoint, zhao2024scores}, where the fine-tuning objective is:\vspace{-0mm}
\begin{equation*}
    \pi^* \in \argmax_\pi \EV_{x \sim \pi}\bigg[- \log \left( p^{pre}_T \right)(x)\bigg] - \alpha D_{KL}(p_T^{\pi}, p_T^{pre})\vspace{-0mm}
\end{equation*}

\looseness -1 We have shown that exploration can be self-guided by a generative model using its representational power of the density it induces. But unfortunately, estimating this quantity (\ie $p_T^{pre}$) is well-known to be a  challenging task in real-world high-dimensional settings~\citep{song2020score,kingma2021variational, skreta2024superposition}.\vspace{-0mm}

\subsection{Generative exploration without density estimation} \vspace{-0mm}
In the following, we show that in the case of diffusion models, the entropy's first variation at the marginal density $p_T^\pi$ induced by $\pi$, as in Equation \eqref{eq:f_from_entropy_linearized} with $\pi = \pi^{pre}$, can be optimized fully bypassing density estimation. 
This can be achieved by leveraging the following fundamental connection between the gradient of the entropy first variation $\nabla_x \delta \entropy \left( p^\pi_T \right)$ and the score $s^{\pi}(\cdot, T)$.

\begin{tcolorbox}[colframe=white!, top=2pt,left=2pt,right=2pt,bottom=2pt]

\begin{center}
\textbf{Gradient of entropy first variation = Negative score}
\begin{equation}
    \nabla_x \delta \entropy \left( p^\pi_T \right) = - \nabla_x \log p_T^\pi \simeq - s^{\pi}(\cdot, T) \label{eq:gradient_as_score}
\end{equation}
\end{center}
\end{tcolorbox}

Using Equation \eqref{eq:gradient_as_score}, it is possible to solve the maximization problem in Equation \eqref{eq:entropy_first_variation} by leveraging a first-order fine-tuning method such as Adjoint Matching \cite{domingo2024adjoint} with $\nabla_x f(x) \coloneqq - s^{pre}(x,T)$ as reward gradient, where $s^{pre}$ is the known (neural) score model approximating the true score function. This realization overcomes the limitation of density estimation and renders the method scalable for high-dimensional real-world problems. For the sake of completeness, we report a detailed pseudocode of its implementation in Appendix \ref{sec:implementation}. \vspace{-0mm}

\subsection{Beyond maximum entropy exploration}\vspace{-0mm}
\looseness -1 As shown in Section \ref{sec:experiments}, beyond the goal of maximum (entropy) exploration, the principle in Equation \eqref{eq:entropy_first_variation} can be used to achieve the desired trade-off between exploration and validity by controlling the regularization coefficient $\alpha$. Higher $\alpha$ values lead to a fine-tuned model $\pi$ that conservatively aligns with the validity encoded in the pre-trained model $\pi^{pre}$. In contrast, low $\alpha$ values enable exploration of low density regions within the approximate data manifold $\Omega_{pre}$, as illustrated in Figure \ref{fig:surprise_maximization}. The latter modality is particularly relevant when a validity checker is available, \eg synthetic accessibility (SA) scores for molecules~\citep{ertl2009estimation}, or formal verifiers for logic circuits~\citep{coudert1990unified}, allowing the discovery of new valid designs that expand the current manifold or dataset, effectively performing a guided data augmentation process~\citep{zheng2023toward}. 

\looseness -1 In particular, one might wonder if there exists a value of $\alpha$ such that the obtained fine-tuned model can provably solve the maximum entropy exploration problem in Equation \eqref{eq:opt_problem}. In the following section, we present a theoretical framework that answers this question positively under the idealized assumptions of exact score estimation and optimization oracle.

\section{Provably Optimal Exploration in One Step} 
\label{sec:theory1}
In this section, we show that under the assumptions of exact optimization and estimation of the entropy first variation $\delta \entropy \left( p^{pre}_T \right)$, a single fine-tuning step using Equation \eqref{eq:entropy_first_variation} yields an optimally explorative policy $\pi$ for entropy maximization over $\Omega_{pre}$. Complete proofs are reported in Appendices \ref{sec:app-theory1} and \ref{sec:app-theory2}.

We start by recalling the notion of Bregman divergence induced by a functional $\Q \in \mF(\X)$ between two densities $\mu, \nu \in \mP(\X)$, namely:
\begin{equation*}
    D_\Q(\mu, \nu) \coloneqq \Q(\mu) - \Q(\nu) - \langle \delta \Q(\nu), \mu - \nu \rangle
\end{equation*}
\looseness -1 Next, we introduce two structural properties for our analysis.\footnote{In line with standard notations in the optimization literature, we present the framework as
$
\min_{p_T^\pi \in \mathbb{P}(\Omega_{\mathrm{pre}})} - \mathcal{H}(p_T^\pi)$, which is clearly equivalent to \eqref{eq:opt_problem}.
}
\begin{restatable}[Relative smoothness and relative strong convexity \citep{lu2018relatively}]{definition}{relativeProperties}
\label{definition:relative_properties}
Let $\F: \mP(\X) \to \R$ a convex functional. We say that $\F$ is L-smooth relative to $\Q \in \mF(\X)$ over $\mP(\X)$ if $\exists$ $L$ scalar s.t. for all $\mu, \nu \in \mP(\X)$:
\begin{equation}
    \F(\nu) \leq \F(\mu) + \langle \delta \F(\mu), \nu - \mu \rangle + LD_\Q(\nu, \mu)
\end{equation}
and we say that $\F$ is l-strongly convex relative to $\Q \in \mF(\X)$ over $\mP(\X)$ if $\exists$ $l \geq 0$ scalar s.t. for all $\mu, \nu \in \mP(\X)$:
\begin{equation}
    \F(\nu) \geq \F(\mu) + \langle \delta \F(\mu), \nu - \mu \rangle + lD_\Q(\nu, \mu)
\end{equation}
\end{restatable}

\looseness -1 In the following, we view the principle in Equation \eqref{eq:entropy_first_variation} as a step of mirror descent~\citep{nemirovskij1983problem} and the KL divergence term as the Bregman divergence induced by an entropic mirror map $\Q = -\entropy$, \ie $D_{KL}(\mu, \nu) = D_{-\entropy}(\mu, \nu)$.
We can now state the following lemma regarding $\F = -\entropy$.
\begin{restatable}[Relative smoothness and strong convexity for $\F=\Q=\entropy$]{lemma}{entropyRelProperties}
\label{lemma:entropy_rel_properties}
For $\F=\Q=-\entropy$ as in Equation \eqref{eq:entropy_first_variation}, we have that $\F$ is 1-smooth (\ie $L=1$) and 1-strongly convex (\ie $l=1$) relative to $\Q$.
\end{restatable}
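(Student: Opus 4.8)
The plan is to exploit the special structure of the claim: the objective $\F$ and the reference (mirror) functional $\Q$ here coincide, both being equal to $-\entropy$. First I would record that $\Q = -\entropy$ is a convex functional on $\mP(\X)$ — it is the prototypical convex entropy — and that its first variation is $\delta\Q(\mu)(x) = \log \frac{\der \mu}{\der x}(x)$ up to an additive constant. On $\mP(\X)$ that constant is immaterial, since it pairs against $\mu - \nu$, which has zero total mass; this is precisely why the induced Bregman divergence $D_\Q$ coincides with the forward KL divergence, recovering the identity $D_{KL}(\mu,\nu) = D_{-\entropy}(\mu,\nu)$ already invoked in the text.

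The key step is then purely algebraic. Setting $\F = \Q$ in the definition of the Bregman divergence gives, for all $\mu, \nu \in \mP(\X)$,
\begin{equation*}
    \F(\nu) - \F(\mu) - \langle \delta\F(\mu), \nu - \mu \rangle = \Q(\nu) - \Q(\mu) - \langle \delta\Q(\mu), \nu - \mu \rangle = D_\Q(\nu, \mu).
\end{equation*}
Consequently, the relative-smoothness inequality of \cref{definition:relative_properties} reduces to $D_\Q(\nu,\mu) \leq L\, D_\Q(\nu,\mu)$ and the relative-strong-convexity inequality to $D_\Q(\nu,\mu) \geq l\, D_\Q(\nu,\mu)$. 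Both hold with equality under the choice $L = 1$ and $l = 1$. To confirm that $l = 1$ is admissible (the definition requires $l \geq 0$ and $\F$ convex), I would invoke convexity of $-\entropy$, which moreover yields $D_\Q \geq 0$.

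There is no genuine analytic obstacle here: the content is simply the observation that a functional is exactly $1$-smooth and $1$-strongly convex \emph{relative to itself}. The only points requiring care are matters of well-definedness, which I would address briefly. I would note that the integrals defining $\entropy$ and $D_\Q$ are finite because, by \cref{proposition:omega_is_compact}, $\Omega_{pre}$ is compact, so the competing densities have bounded support; and I would remark that the additive-constant ambiguity in $\delta\Q$ leaves all the pairings above unchanged. This reduces the entire statement to the reflexive identity $D_\Q = D_\Q$.
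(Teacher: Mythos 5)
Your proof is correct: with $\F=\Q$, the relative-smoothness and relative-strong-convexity inequalities of Definition~\ref{definition:relative_properties} collapse, by the very definition of the Bregman divergence, to the reflexive identity $D_\Q(\nu,\mu)=D_\Q(\nu,\mu)$, so $L=l=1$ hold with equality. The paper never actually writes out a proof of Lemma~\ref{lemma:entropy_rel_properties} (Appendix~\ref{sec:app-theory1} proves only Theorem~\ref{theorem:exact_derivative_convergence}), so your argument is exactly the definitional observation the paper treats as immediate; the only inessential slip is your finiteness remark, since compact support does not rule out $\int \rho \log \rho = +\infty$, but the equalities hold in the extended reals regardless, so nothing is lost.
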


We can finally state the following set of idealized assumptions as well as the one-step convergence guarantee.

\begin{restatable}[Exact estimation and optimization]{assumption}{allAssumptions}
\label{assumption:all_assumptions}
We consider the following assumptions:\vspace{-0mm}
\begin{enumerate}
    \item Exact score estimation: $s^{pre}(\cdot, T) = \nabla_x \log p_T^{pre}$\vspace{-0mm}
    \item The optimization problem in Equation \eqref{eq:entropy_first_variation} is solved exactly.\vspace{-0mm}
\end{enumerate}
\end{restatable}
\begin{tcolorbox}[colframe=white!, top=2pt,left=2pt,right=2pt,bottom=2pt]
\begin{restatable}[One-step convergence]{theorem}{exactDerivativeConvergence}
\label{theorem:exact_derivative_convergence}
Given Assumptions \ref{assumption:all_assumptions}, fine-tuning a pre-trained model $\pi^{pre}$ according to Equation \eqref{eq:entropy_first_variation} with $\alpha = L = 1$, leads to a policy $\pi$ inducing a marginal distribution $p^{\pi}_T \in \mP(\Omega_{pre})$ such that:
\begin{equation}
    \entropy(p_T^*) - \entropy(p^{\pi}_T) \leq \frac{L-l}{K}D_{KL}(p_T^*, p_T^{pre}) = 0
\end{equation}
where $p_T^* \coloneqq p_T^{\pi^*}$ is the marginal distribution induced by the optimal exploratory policy $\pi^* \in \argmax_{\pi \in \Lambda} \entropy(p^{\pi}_T)$ with $\Lambda = \{\pi : p^{\pi}_T \in \mP(\Omega_{pre})\}$ being the set of policies compatible with the approximate data manifold $\Omega_{pre}$.
\end{restatable}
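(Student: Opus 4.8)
The plan is to recognize the KL-regularized fine-tuning step in Equation \eqref{eq:entropy_first_variation} as \emph{exactly} one iteration of mirror descent for the minimization of $\F = -\entropy$ over $\mP(\Omega_{pre})$, using the entropic mirror map $\Q = -\entropy$, and then to invoke a one-step descent estimate that collapses because the relative smoothness and relative strong convexity moduli coincide by Lemma \ref{lemma:entropy_rel_properties}.

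First I would make the correspondence precise. Since $\delta \entropy(p_T^{pre}) = -\log p_T^{pre}$ by Equation \eqref{eq:f_from_entropy_linearized}, we have $\delta\entropy(p_T^{pre}) = -\delta\F(p_T^{pre})$, so maximizing the objective in Equation \eqref{eq:entropy_first_variation} over $\pi$ is the same as minimizing $\langle \delta\F(p_T^{pre}), \mu\rangle + \alpha D_{KL}(\mu, p_T^{pre})$ over $\mu = p_T^\pi$, where $D_{KL} = D_\Q$ for $\Q = -\entropy$. With $\alpha = L = 1$ this is precisely the mirror step with stepsize $1/L$. Two points need care: (i) Assumption \ref{assumption:all_assumptions} must be used to guarantee the step is taken exactly — exact score estimation ensures the linearization $\delta\entropy(p_T^{pre})$ (equivalently, its gradient $-s^{pre}(\cdot,T)$) is the true one, and exact optimization ensures the resulting $p_T^\pi$ is the true minimizer; and (ii) feasibility, $p_T^\pi \in \mP(\Omega_{pre})$, which I would obtain from the fact that finiteness of $D_{KL}(p_T^\pi, p_T^{pre})$ forces $p_T^\pi \ll p_T^{pre}$ and hence $\supp(p_T^\pi) \subseteq \Omega_{pre}$.

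Next I would derive the standard one-step mirror descent inequality. Writing $p_0 = p_T^{pre}$ and $p_1 = p_T^\pi$, the first-order optimality condition for $p_1$, combined with the three-point identity $\langle\delta\Q(p_1)-\delta\Q(p_0),\mu-p_1\rangle = D_\Q(\mu,p_0) - D_\Q(\mu,p_1) - D_\Q(p_1,p_0)$, gives $\langle \delta\F(p_0), p_1 - \mu\rangle \leq L[D_\Q(\mu, p_0) - D_\Q(\mu, p_1) - D_\Q(p_1, p_0)]$ for every feasible $\mu$. Bounding $\F(p_1)$ from above by $L$-relative smoothness and $\F(\mu)$ from below by $l$-relative strong convexity, and then canceling the $D_\Q(p_1,p_0)$ terms, I obtain $\F(p_1) - \F(\mu) \leq (L-l)D_\Q(\mu, p_0) - L D_\Q(\mu, p_1)$. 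Taking $\mu = p_T^*$ and dropping the nonpositive term $-L D_\Q(p_T^*, p_1)$ yields $\F(p_1) - \F(p_T^*) \leq (L-l)D_{KL}(p_T^*, p_T^{pre})$, which is the claimed bound with $K=1$.

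Finally I would close the argument by substituting $L = l = 1$ from Lemma \ref{lemma:entropy_rel_properties}: the right-hand side vanishes, so $\entropy(p_T^*) - \entropy(p_T^\pi) = \F(p_1) - \F(p_T^*) \leq 0$; since $p_T^*$ is the entropy maximizer over $\mP(\Omega_{pre})$ and $p_T^\pi$ is feasible, the reverse inequality also holds, giving equality and hence optimality of $\pi$. I expect the main obstacle to be the justification in (i)--(ii) above — namely, carefully showing that the fine-tuning $\argmax$ over policies coincides with the \emph{constrained} mirror step over measures, including the surjectivity of $\pi \mapsto p_T^\pi$ onto $\mP(\Omega_{pre})$ implicit in the exact-optimization assumption and the support containment enforced by finiteness of the KL term. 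The descent inequality itself is routine once the relative smoothness and strong convexity constants from Lemma \ref{lemma:entropy_rel_properties} are in hand.
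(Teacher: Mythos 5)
Your proposal is correct and follows essentially the same route as the paper: both interpret Equation \eqref{eq:entropy_first_variation} as one mirror descent step with mirror map $\Q=-\entropy$, invoke the relative smoothness/strong convexity framework of \citet{lu2018relatively} together with the three-point property, collapse the bound via $L=l=1$ from Lemma \ref{lemma:entropy_rel_properties}, and obtain feasibility from the support-constraining effect of the KL term. The only difference is cosmetic: the paper runs the general $K$-step telescoping analysis and then specializes to one iteration, whereas you derive the one-step inequality directly.
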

\end{tcolorbox}

Theorem \ref{theorem:exact_derivative_convergence} hints at promising performances of using Equation \eqref{eq:entropy_first_variation} as a fine-tuning objective for maximum entropy exploration, as under Assumptions \ref{assumption:all_assumptions}, it provably leads to an optimally explorative policy in one step. However, Assumptions \ref{assumption:all_assumptions} clearly do not hold in most realistic settings as:\vspace{-0mm}
\begin{enumerate}
    \item The estimation quality of the score $s^{pre}(\cdot, T)$ learned from data is approximate.\vspace{-0mm}
    \item The high-dimensional stochastic optimal control methods used to solve Equation \eqref{eq:entropy_first_variation} are approximate in practice.\vspace{-0mm}
\end{enumerate}

\looseness -1 Therefore, optimizing the entropy first variation as in Equation \eqref{eq:entropy_first_variation} is actually unlikely to lead to an optimally explorative diffusion model, as shown experimentally in Section \ref{sec:experiments}. To address this issue, in the next section we propose an exploration algorithm by building on the exploration principle in Equation \eqref{eq:entropy_first_variation}.

\section{Algorithm: \AlgNameLong} 
\label{sec:algorithm}
\looseness -1 In the following, we present \AlgNameDef, see Algorithm \ref{alg:memd_algorithm}, which reduces manifold exploration to sequential fine-tuning of the pre-trained diffusion model $\pi^{pre}$ by following a mirror descent (MD) scheme~\citep{nemirovskij1983problem}. Crucially, each iteration $k$ of \AlgNameShort corresponds to a fine-tuning step according to Equation  \eqref{eq:entropy_first_variation}, where the pre-trained model $\pi^{pre} \eqqcolon \pi_0$ is then replaced by the model at the previous iteration, namely $\pi_{k-1}$. Concretely, this makes it possible to reduce the optimization of the entropy functional, which is non-linear \wrt $p_T^{\pi}$, to a sequence of optimization problems of linear functionals.

\begin{algorithm}[H]
    \caption{\AlgNameDef}
    \label{alg:memd_algorithm}
        \begin{algorithmic}[1]
        \INPUT{ $K: $ number of iterations, $\pi^{pre}: $ pre-trained diffusion, $\{\alpha_k \}_{k=1}^{K}$ regularization coefficients}
        \STATE{\textbf{Init:} $\pi_0 \coloneqq \pi^{pre} $}
        \FOR{$k=1, 2, \hdots, K$}
        \STATE{Set: $\nabla_x f_{k} = - s^{k-1}$ with $s^{k-1} = s^{\pi_{k-1}}$}
        \STATE{Compute $\pi_k$ via first-order linear fine-tuning:
        \vspace{-4pt}
            \begin{equation*}
                \pi_k \leftarrow \LinearFineTuningSolver(\nabla_x f_k, \alpha_k, \pi_{k-1}) \vspace{-4pt}
            \end{equation*}
        }
        \ENDFOR
        \OUTPUT policy $\pi \coloneqq \pi_{K}$
        \end{algorithmic}
\end{algorithm}\vspace{-6pt}
Algorithm \ref{alg:memd_algorithm} requires as inputs a pre-trained diffusion model $\pi^{pre}$, the number of iterations $K$, and a schedule of regularization coefficients $\{\alpha_k\}_{k=1}^K$. At each iteration, \AlgNameShort sets the gradient of the entropy first variation evaluated at the previous policy $\pi_{k-1}$, namely $\nabla_x \delta \entropy \left( p^{k-1}_T \right)$, to be the score $s^{k-1} \coloneqq s^{\pi_{k-1}}$ associated to the diffusion model $\pi_{k-1}$ obtained at the previous iteration (line 3). Then, it computes policy $\pi_k$ by solving the following fine-tuning problem
\begin{equation*}
% \label{eq:oracle}
    \argmax_{\pi}\quad  \EV_{x \sim \pi}\bigg[- \log \left( p^{k-1}_T \right)(x)\bigg]  - \alpha_k KL(p_T^{\pi}, p_T^{k-1})
\end{equation*}
\looseness -1 via a first-order solver such as Adjoint Matching \citep{domingo2024adjoint}, using $\nabla f_k \coloneqq - s^{k-1}(\cdot,T)$ as in Eq. \eqref{eq:gradient_as_score} (line 4). Ultimately, it returns a final policy $\pi \coloneqq \pi_K$.
We report a possible implementation of \LinearFineTuningSolver in Appendix \ref{sec:implementation}.

Crucially, \AlgNameShort controls the distributional behavior of the final diffusion model $\pi$, which is essential to optimize the entropy as it is a non-linear functional over $\mP(\Omega_{pre})$. 

\looseness -1 However, it is still unclear whether the algorithm provably converges to the optimally explorative diffusion model $\pi^*$. In the next section, we answer affirmatively this question by developing a theoretical analysis under general assumptions based on recent results for mirror flows~\citep{hsieh2019finding}.

\section{Manifold Exploration Guarantees} 
\label{sec:theory2}
The purpose of this section is to establish a realistic framework under which \cref{alg:memd_algorithm} is guaranteed to solve the maximum entropy manifold exploration Problem \eqref{eq:opt_problem}.\vspace{-4pt}
\subsection{Key Assumptions}

We now present all the assumptions and provide an explanation of why they are realistic. Conceptually, these assumptions align with the \emph{stochastic approximation} framework of \citet{benaim2006dynamics,mertikopoulos2024unified, hsieh2021limits}. Specifically, recall that $p_T^k \coloneqq p^{\pi_k}_T$ represents the (stochastic) density produced by the \LinearFineTuningSolver oracle at the \(k\)-th step of \AlgNameShort, and consider the following \emph{mirror descent} iterates:\vspace{-2mm}
\begin{equation}
\label{eq:MD}
\tag{MD$_k$}
    \pMD^{k} \coloneqq \argmax_{p \in \mathbb{P}(\Omega_{pre})} \quad \langle \der \entropy \left(  p_T^{\pi_{k-1}} \right), p \rangle - \frac{1}{\step_k} D_{KL}(p,  p_T^{\pi_{k-1}})
\end{equation}  
where $1 / \gamma_k = \alpha_k$ in Algorithm \ref{alg:memd_algorithm}.  

\looseness -1 As explained in \cref{sec:theory1}, the maximum entropy manifold exploration problem \eqref{eq:opt_problem} can be solved in a single step using \eqref{eq:MD}. However, in realistic settings where only noisy \emph{and} biased approximations of \eqref{eq:MD} are available, it becomes essential to quantify the deviations due to these approximations from the idealized iterates in \eqref{eq:MD}. Additionally, the step sizes \(\step_k\) must be carefully designed to account for such deviations. This section aims to achieve precisely this goal. To this end, we first require:
\begin{assumption}[Support Compatibility]
\label{asm:support}
We assume that $\text{supp}(p_T^{\pi_k}) \subset \tilde{\Omega} \text{ for all } k, $ and $ supp(p_j^{\pi_k}) = \tilde{\Omega} $ for some $ j$.
%$\supp(p^{\pi_k}_T) \subseteq \supp(p_{pre})$ for all $k$.
\end{assumption}\vspace{-2mm}
Next, we require a purely technical assumption that is typically satisfied in practice:
\begin{assumption}
\label{asm:precompact}
    The sequence $\{\delta \entropy(p_T^{\pi_k})\}_k$ is precompact in the topology induced by the $L_\infty$ norm.\vspace{-2mm}
\end{assumption}
\looseness -1 Now, denote by $\hist_k$ the filtration up to step $k$, and consider the decomposition of the oracle into its \emph{noise} and \emph{bias} parts: 
\begin{align}
    \bias_k &\coloneqq \EV \left[ \delta \entropy(p^{\pi_k}_T) -   \delta \entropy(\pMD^k)  \,  \vert\, \hist_k \right]\\
    \noise_k &\coloneqq\delta \entropy(p^{\pi_k}_T) -   \delta \entropy(\pMD^k)  -  \bias_k \vspace{-7mm}
\end{align}

Observe that, conditioned on \(\hist_k\), \(\noise_k\) has zero mean, while \(\bias_k\) captures the \emph{systematic} error. We then impose the following:% for a positive constant \(\uupper\), it holds that:
\begin{assumption}[Noise and Bias]
\label{asm:approximate}
The following events happen almost surely:
\begin{align}
    \label{eq:bias-asym}
    &\norm{\bias_k}_\infty \rightarrow 0 \\  
\label{eq:uupper}
    &\sum_k \EV \left[ \step_k^2 \left( \norm{\bias_k}_\infty^2 + \norm{\noise_k}_\infty^2 \right) \right] <\infty\\
    \label{eq:bias-step}
    &\sum_k{\step_k\norm{\bias_k}_\infty} < \infty
\end{align}
\end{assumption}\vspace{-4mm}
\looseness -1 Two important remarks are worth noting. First, \eqref{eq:bias-asym} represents a \emph{necessary} condition for convergence: if this condition is violated, it becomes straightforward to construct examples where no practical algorithm can successfully solve the maximum entropy problem. Second, \eqref{eq:uupper} and \eqref{eq:bias-step} address the trade-off between the \emph{accuracy} of the approximate oracle \LinearFineTuningSolver and the aggressiveness of the step sizes, $\step_k$. Intuitively, a smaller noise and bias allows for the use of larger step sizes. In this context, \eqref{eq:uupper} and \eqref{eq:bias-step} establish a concrete criterion ensuring that the task of finding the optimally explorative policy succeeds with probability 1.

We are now finally ready to state the following result.
\begin{tcolorbox}[colframe=white!, top=2pt,left=2pt,right=2pt,bottom=2pt]
\begin{restatable}[Convergence under general assumptions]{theorem}{main}
\label{thm:main}
Consider the standard Robbins-Monro step-size rule: $\sum_k \step_k = \infty, \sum_k \step_k^2 < \infty$. Then under \crefrange{asm:support}{asm:approximate}, the sequence of marginal densities $p^k_T$ induced by the iterates $\pi_k$ of \cref{alg:memd_algorithm} converges weakly to $p_T^*$ almost surely. Formally, we have that:
\begin{equation}
     p_T^k \rightharpoonup p_T^* \quad \text{a.s.}
\end{equation}
where $p_T^* \in \argmax_{p_T \in \mP(\Omega_{pre})} \entropy(p_T)$ is the maximum entropy marginal density compatible with $\Omega_{pre}$.
\end{restatable}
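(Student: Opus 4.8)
The plan is to interpret \cref{alg:memd_algorithm} as an \emph{inexact stochastic mirror descent} on $\mP(\Omega_{pre})$ for the functional $\F=-\entropy$ with entropic mirror map $\Q=-\entropy$, and to analyze it through the asymptotic-pseudotrajectory (APT) lens of \citet{benaim2006dynamics}, as adapted to mirror dynamics by \citet{hsieh2019finding}. The natural state variable is the dual coordinate $z_k \coloneqq \delta\entropy(p_T^{\pi_k})$, which is precisely the object \cref{asm:precompact} keeps precompact, and the natural Lyapunov functional is the Bregman divergence $D_{KL}(p_T^*,\cdot)=D_{-\entropy}(p_T^*,\cdot)$ to the unique maximum-entropy target $p_T^*$. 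The idealized update \eqref{eq:MD} plays the role of the mean-field step, while the gap $\delta\entropy(p_T^{\pi_k})-\delta\entropy(\pMD^k)=\bias_k+\noise_k$ supplied by \LinearFineTuningSolver is the stochastic-approximation error.

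First I would pin down the mean-field flow and show it converges globally to $p_T^*$. Writing the optimality condition of \eqref{eq:MD} in dual coordinates and using $\delta\F=\delta\Q$ (immediate since $\F=\Q=-\entropy$), the exact step reduces, up to the normalization multiplier enforcing $p\in\mP(\Omega_{pre})$, to the linear contraction $z^{MD}_k=(1-\step_k)z_{k-1}+\mathrm{const}$ toward the uniform density on $\Omega_{pre}$, i.e.\ the maximizer $p_T^*$. In continuous time this is the mirror flow $\tfrac{\der}{\der t}\delta\Q(p_t)=-\delta\F(p_t)$ projected onto $\mP(\Omega_{pre})$. Using \cref{lemma:entropy_rel_properties} (with $L=l=1$) together with the three-point identity for $D_\Q$, I would reproduce the one-step estimate behind \cref{theorem:exact_derivative_convergence} in continuous time to show that $t\mapsto D_{KL}(p_T^*,p_t)$ is a strict Lyapunov function; relative strong convexity ($l=1$) makes $p_T^*$ the unique rest point, so it is globally attracting.

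Next I would verify that the iterates of \cref{alg:memd_algorithm} form an APT of this flow. Substituting $z_k=z^{MD}_k+\bias_k+\noise_k$ into the mean-field recursion yields a Robbins--Monro scheme in which, conditioned on $\hist_k$, $\noise_k$ is a martingale difference and $\bias_k$ is the systematic error. The Robbins--Monro step rule $\sum_k\step_k=\infty,\ \sum_k\step_k^2<\infty$ together with \eqref{eq:uupper} makes the accumulated noise a convergent $L^2$-bounded martingale (Doob), while \eqref{eq:bias-step} makes the weighted bias summable and \eqref{eq:bias-asym} supplies the asymptotic consistency the APT property requires; this is the standard checklist for an APT of the mean-field semiflow. \cref{asm:precompact} then furnishes a nonempty compact limit set in the $L_\infty$ topology, and \cref{asm:support} guarantees every limit density is supported on $\tilde\Omega=\Omega_{pre}$, hence feasible. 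By Bena\"im's internal-chain-transitivity characterization the limit set is contained in the attractor of the flow, which by the previous step is the singleton $\{p_T^*\}$. Since every subsequence then has a further subsequence converging weakly to the same $p_T^*$, we conclude $p_T^k\rightharpoonup p_T^*$ almost surely.

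I expect the main obstacle to be the rigorous bookkeeping of the inexact-oracle error within the Bregman/KL geometry: the errors $\bias_k,\noise_k$ are measured on the first-variation (log-density) level in $L_\infty$, and I must show that the $\step_k$-weighted summability conditions \eqref{eq:uupper}--\eqref{eq:bias-step} translate into control of the deviation between $D_{KL}(p_T^*,p_T^{\pi_k})$ and $D_{KL}(p_T^*,\pMD^k)$ along the flow, uniformly over the merely-compact, implicitly-defined manifold $\Omega_{pre}$. Closely related is justifying well-posedness and global attraction of the mirror flow when the constraint $p\in\mP(\Omega_{pre})$ is enforced only through the KL regularizer rather than an explicit projection, and confirming that precompactness in $L_\infty$ (\cref{asm:precompact}) delivers exactly the weak --- not a stronger --- mode of convergence that is claimed.
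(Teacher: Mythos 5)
Your proposal is correct and follows the same overall architecture as the paper's proof: pass to dual coordinates $z_k=\delta\entropy(p_T^{\pi_k})$, view \cref{alg:memd_algorithm} as a noisy/biased realization of the exact iterates \eqref{eq:MD}, invoke the stochastic-approximation (APT) machinery of \citet{benaim2006dynamics} with \cref{asm:precompact} for precompactness, \cref{asm:support} for well-posedness of the duals, and \cref{asm:approximate} plus Robbins--Monro steps for the APT property, and finally identify the limit via internally chain-transitive sets. The one genuine difference is the Lyapunov function used to show the mirror flow has $\{p_T^*\}$ as its unique ICT set: the paper takes $\entropy$ itself and computes the dissipation identity $-\tfrac{\der}{\der t}\entropy(p_t)=-\Var_{X\sim p_t}\bigl(\log p_t(X)\bigr)\le 0$, strict unless $p_t$ is uniform, then concludes by strict concavity and \citep[Prop.~6.4]{benaim2006dynamics}; you instead take the Bregman divergence $D_{KL}(p_T^*,\cdot)$ and drive it down via relative strong convexity ($l=1$ from \cref{lemma:entropy_rel_properties}). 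Both are valid: your choice has the advantage of mirroring the discrete analysis of \cref{theorem:exact_derivative_convergence} (and in fact yields an exponential contraction $\tfrac{\der}{\der t}D_{KL}(p_T^*,p_t)\le -D_{KL}(p_T^*,p_t)$ along the flow, since $\entropy(p_T^*)\ge\entropy(p_t)$), whereas the paper's choice exhibits the dissipation rate explicitly as a variance and needs no reference point. Your closing worry about translating $L_\infty$-level errors into KL-level control largely dissolves if you follow the paper's device of never leaving dual coordinates: the APT property, the limit identification, and assumption verification are all carried out in $L_\infty$ on the duals, and weak convergence of the densities is recovered only at the very end from $L_\infty$ convergence of $-\log p_T^k$ together with compactness of $\Omega_{pre}$.
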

\end{tcolorbox}
\textbf{Remark.}~It is possible to derive an explicit convergence rate for \cref{thm:main} corresponding to $\tilde{\mathcal{O}}((\log\log k)^{-1})$, which is, in general, the best achievable without additional assumptions~\cite{karimi2024sinkhorn}. We omit the technical details, as they are rather involved and offer limited practical relevance.

\section{Experimental Evaluation}
\label{sec:experiments}

\newlength{\imgw}
\setlength{\imgw}{0.25\textwidth}

\begin{figure*}[htbp]
    \centering
    \begin{subfigure}{\imgw}
        \centering
        \includegraphics[width=\textwidth]{}
        \caption{True data support}
        \label{fig:toy_ex_a}
    \end{subfigure}%
    \begin{subfigure}{\imgw}
        \centering
        \includegraphics[width=\textwidth]{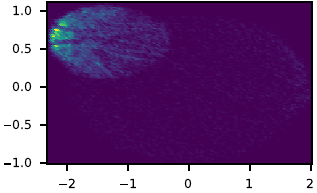}
        \caption{Pre-trained model sample}
        \label{fig:toy_ex_b}
    \end{subfigure}%
    \begin{subfigure}{\imgw}
        \centering
        \includegraphics[width=\textwidth]{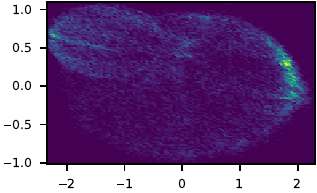}
        \caption{Fine-tuned model sample}
        \label{fig:toy_ex_c}
    \end{subfigure}%
    \begin{subfigure}{\imgw}
        \centering
        \includegraphics[width=\textwidth]{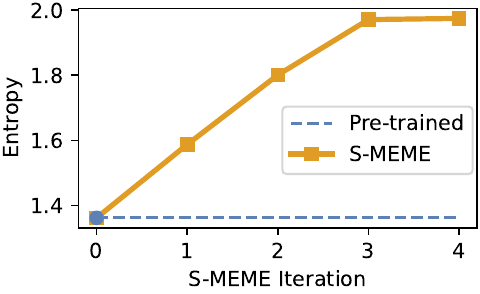}
        \caption{Entropy evaluation}
        \label{fig:toy_ex_d}
    \end{subfigure}\vspace{-0pt}
    \caption{\looseness -1 Illustrative example with unbalanced pre-trained model $\pi^{pre}$. (\ref{fig:toy_ex_a}) Support of true data distribution composed of a small high-density region (yellow) and a wide low-density area (green). (\ref{fig:toy_ex_b}) Sample from pre-trained model $\pi^{pre}$. (\ref{fig:toy_ex_c}) Sample from $\pi_4$ obtained after $4$ steps of \AlgNameShort. (\ref{fig:toy_ex_d}) Entropy estimation of densities $\{p_k\}_{k=1}^K$ obtained via $K=4$ steps. Notice that \AlgNameShort returns a fine-tuned model with significantly higher entropy than $\pi^{pre}$ (see \ref{fig:toy_ex_c} and \ref{fig:toy_ex_d}), and higher density in low-density regions for the pre-trained model (compare (\ref{fig:toy_ex_a}) and (\ref{fig:toy_ex_c})), while preserving the data support shown in \ref{fig:toy_ex_a}.}
    \label{fig:toy_ex} \vspace{-0pt}
\end{figure*}

\newlength{\imw}
\setlength{\imw}{0.12\textwidth}
\begin{figure*}[!t]
    \centering
    \includegraphics[width=1\textwidth]{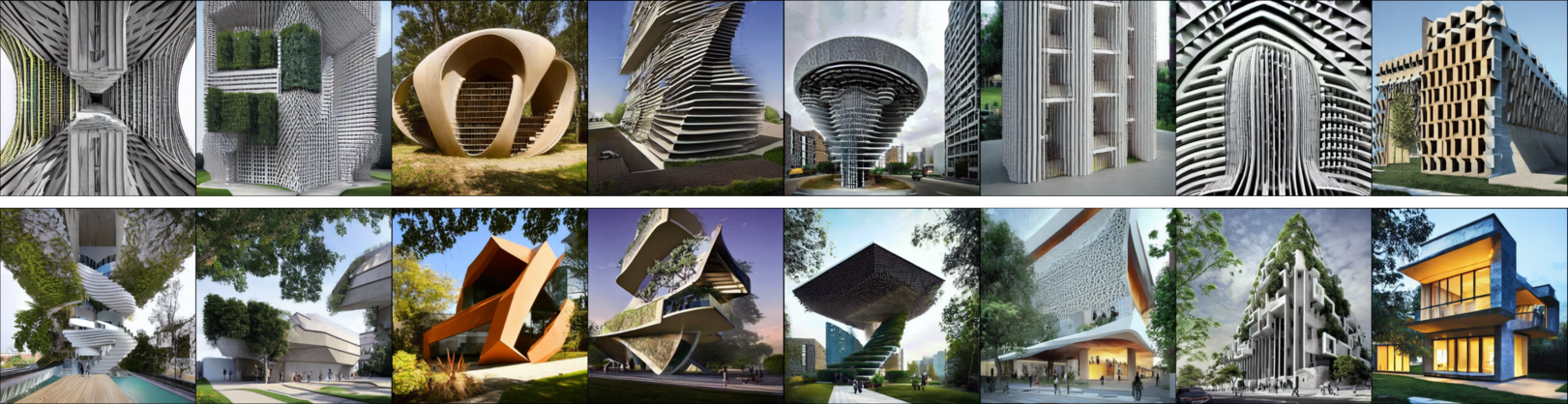}\vspace{-0pt}
    \caption{\looseness -1 Generated images from $\pi^{pre}$ (top)  and $\pi_3$ (bottom) for a fixed set of initial noisy samples using the prompt "A creative architecture.". We observe an increase in complexity and originality of the \AlgNameShort generated images while preserving semantic faithfulness, likely hinting at higher probability of sampling from a lower-density region of $\pi^{pre}$.}
    \label{fig:images_architecture}\vspace{-0pt}
\end{figure*}

In this section, we analyze the ability of \AlgNameShort to induce explorative policies on two tasks: (1) An illustrative example to showcase visually interpretable exploration and ability to sample from low-density regions (see Figure \ref{fig:toy_ex}), and (2) A text-to-image task aiming to explore the approximate manifold of \emph{creative architecture} designs (see Figure \ref{fig:images_architecture}).
Additional details on experiments are provided in \cref{sec:experiment-details}.

\mypar{(1) Illustrative setting}~~ \looseness -1 In this experiment, we consider the common setting where the density $p_T^{pre}$ induced by a pre-trained model $\pi^{pre}$ presents a high-density region (yellow area in Figure \ref{fig:toy_ex_a}) and a low-density region (green area in in Figure \ref{fig:toy_ex_a}). As illustrated in Figure \ref{fig:toy_ex_a}, the pre-trained model $\pi^{pre}$ induces an unbalanced density, where $N=80000$ samples are obtained mostly from the high-density area. 
For quantitative evaluation, we compute a Monte Carlo estimate of $\entropy(p_T^\pi)$.
Crucially, Figure \ref{fig:toy_ex} shows that \AlgNameShort can induce a highly explorative density in terms of entropy (see Figure \ref{fig:toy_ex_d}), compared with the pre-trained model, after only $K=4$ iterations. One can notice that the density induced by the fine-tuned model  (see Figure \ref{fig:toy_ex_c}) is significantly more uniform and higher in low-probability regions for the pre-trained model (see right region of Figure \ref{fig:toy_ex_c}), while preserving the support of the data distribution. 

\begin{table}[ht!]
\centering
\resizebox{\columnwidth}{!}{%
\begin{tabular}{lcccc}
\toprule
 & $\mathbf{p_T^{pre}}$ & \textbf{\AlgNameShort 1} & \textbf{\AlgNameShort 2} & \textbf{\AlgNameShort 3} \\
\midrule
\textbf{FID}$\mathbf{(p, p_T^{pre})}$  & 0.0 & 10.25 & 9.83  & 19.15  \\
\textbf{CLIP} & 22.27 & 20.79 &   20.88 &  19.86 \\
$\mathbf{\widehat \entropy(p, p_T^{pre})}$  & -1916.47 & 564.72 &  482.81 &  843.88 \\
\bottomrule
\end{tabular}
}
\caption{FID, CLIP and cross-entropy evaluation of $p_T^{pre}$ and $p_T^{\pi_k}$. For $k=1,2,3$, \AlgNameShort achieves larger distance to $p_T^{pre}$ while preserving high CLIP score.}\vspace{-4pt}
\label{tab:fid-clip}
\end{table}

\mypar{(2) Text-to-image manifold exploration}~~ \looseness -1 We consider the problem of exploring the data manifold of \emph{creative architecture} designs given a pre-trained text-to-image diffusion model. For this we utilize the stable diffusion (SD) 1.5 ~\citep{rombach2021highresolution} checkpoint pre-trained on the LAION-5B dataset~\citep{schuhmann2022laion}.
Since SD-1.5 uses classifier-free guidance~\citep{ho2022classifier}, we fine-tune the velocity resulting from applying the classifier-free guidance with a guidance scale of $w=8$ which is standard for SD-1.5.
Similarly, we also use the same guidance scheme for the fine-tuned model.
We fine-tuned the checkpoint with $K=3$ iterations of \AlgNameShort on a single Nvidia H100 GPU for the prompt "A creative architecture.". 
In \cref{fig:images_architecture}, we show images generated from  $\pi^{pre}$, $\pi_1$ and $\pi_3$, resulting from  the same initial noise samples.
One can notice an increase in the complexity and originality of the respective images, likely hinting at higher probability of the fine-tuned model to sample from a lower-density region for $\pi^{pre}$. Moreover, less conservative architectures are sampled with more steps of \AlgNameShort while preserving semantic faithfulness.
We measure this in \cref{tab:fid-clip} by computing the Fréchet inception distance (FID)~\citep{heusel2017gans} and Gaussian cross-entropy in feature space of Inception-v3  between $p_T^{\pi_k}$ and $p_T^{pre}$   for $k=1,2,3$ as well as the CLIP score~\citep{hessel2021clipscore} for the distribution induced by the specific prompt.
The main reason for these proxy metrics is the intractability of computing $\log p_T^{pre}(x)$ in high-dimensional spaces, such as that of images generated by a large diffusion model.
One can notice from \cref{tab:fid-clip} an increase in FID and cross-entropy between the distributions as $k$ increases, while the fine-tuned model preserves the CLIP score of the pre-trained model.
We provide further results for text-to-image in \cref{sec:extra_images}, and in Appendix \ref{app:diversity-measures}, we evaluate the Vendi score~\citep{friedman2022vendi} as a diversity metric.

\section{Related Work}
\label{sec:related_works}
In the following, we present relevant work in related areas.

\mypar{Maximum State Entropy Exploration}
\looseness -1 Maximum state entropy exploration, introduced by \citet{hazan2019maxent}, addresses the pure-exploration RL problem of maximizing the entropy of the state distribution induced by a policy over a dynamical system's state space~\citep[\eg][]{lee2019efficient, mutti2021task, guo2021geometric}. The presented manifold exploration problem is closely related, with $p^\pi_T$ representing the state distribution induced by policy $\pi$ over a subset of the state space. Nonetheless, in Problem \eqref{eq:opt_problem} the admissible state distributions $\mP(\Omega_{pre})$ are represented only implicitly via a pre-trained generative model $\pi^{pre}$, able to capture complex design spaces, \eg valid molecules. Moreover, exploration is guided by the diffusion model's score function via Eq. \ref{eq:gradient_as_score}, overcoming the need of explicit entropy or density estimation, a fundamental challenge in this area~\citep{liu2021behavior, seo2021state, mutti2021task}. 
\looseness -1 Recent studies have tackled maximum entropy exploration with finite sample budgets~\citep[\eg][]{mutti2022importance, mutti2022challenging, mutti2023convex, prajapat2023submodular, de2024global}. We believe several ideas presented in this work can extend to such settings. Ultimately, to the best of our knowledge, this is the first work providing a rigorous theoretical analysis of maximum state entropy exploration over continuous state spaces, albeit for a specific sub-case, as well as leveraging this formulation for fine-tuning of diffusion models.

\mypar{Continuous-time RL}
\looseness -1 Continuous-time RL extends stochastic optimal control \citep{fleming2012deterministic} to handle unknown rewards or dynamics \citep[\eg][]{doya2000reinforcement, wang2020reinforcement}. Problem \eqref{eq:opt_problem} represents the pure exploration case of continuous-time RL, were the goal is to compute a (purely) exploratory policy $\pi$ over a subset of the state space $\Omega_{pre} \subseteq \X$ implicitly defined by a pre-trained generative model $\pi^{pre}$. 
Moreover, Problem \eqref{eq:opt_problem} can be further motivated as a continuous-time RL reward learning setting~\citep[\eg][]{lindner2021information, mutny2023active, de2024geometric}, where an agent aims to learn an unknown homoscedastic reward function such as toxicity over a molecular space~\citep{yang2022exploring}. To our knowledge, this is the first work that tackles maximum entropy exploration in a continuous-time RL setting.

\mypar{Diffusion models fine-tuning via optimal control}
\looseness -1 Recent works have framed diffusion models fine-tuning with respect to a reward function $f: \X \to \R$ as an entropy-regularized stochastic optimal control problem~\citep[\eg][]{uehara2024fine, tang2024fine, uehara2024feedback,  domingo2024adjoint}. 
In this work, we introduce a scalable fine-tuning scheme, based on first-order solvers for classic rewards \citep[\eg][]{domingo2024adjoint}, that optimizes a broader class of functionals requiring information about the full density $p^\pi_T$, such as entropy and alternative exploration measures~\citep{de2024global, hazan2019maxent}. This paves the way to using diffusion models for optimization of distributional objectives, rather than simple scalar rewards. 
Beyond classic optimization, our framework is particularly relevant for Bayesian optimization, or bandit, problems~\citep[\eg][]{ uehara2024feedback}, where the reward function to be optimized over the manifold is unknown and therefore exploration is essential.

\mypar{Sample diversity in diffusion models generation}
\looseness -1 The lack of sample diversity in diffusion model generation is a key challenge tackled by various works ~\citep[\eg][]{corso2023particle, um2023don, kirchhof2024sparse, sadat2024cadsunleashingdiversitydiffusion, um2025self}. 
These methods complement ours by enabling diverse sampling from the fine-tuned explorative model obtained via \AlgNameShort.
% , which optimizes a distributional rather than particle behavior
While prior works focus on generating diverse samples from a fixed diffusion model, ours provides a framework for manifold exploration as policy optimization via reinforcement learning. This enables scalable and provable maximization of typical exploration measures in RL, such as state entropy \citep{hazan2019maxent}. Among related works, \citet{miao2024training} shares the closest intent, but lacks a formal setting with exploration guarantees, and the diffusion model's exploration process relies on computing an external metric for exploration, rather than being self-guided via its own score function as \AlgNameShort achieves via Eq. \eqref{eq:gradient_as_score}.

\mypar{Optimization over probability measures via mirror flows}
\looseness -1 Recently, there has been a growing interest in analyzing optimization problems over spaces of probability measures. Existing works have explored applications including GANs \cite{hsieh2019finding}, optimal transport \cite{aubin2022mirror, leger2021gradient, karimi2024sinkhorn}, and kernelized methods \cite{dvurechensky2024analysis}. However, to the best of our knowledge, the case of the entropy-based objective in Problem \ref{eq:opt_problem}, along with its associated relative smoothness analysis framework, has not been previously addressed. Moreover, prior approaches do not leverage a key aspect of our framework: the admissible space of probability measures, $\mP(\Omega_{pre})$, is represented only implicitly via a pre-trained generative model, which can approximate complex data manifolds learned from data, such as molecular spaces. This idea of optimization with implicit constraints captured by generative models is novel and absent in earlier work.

\section{Conclusion}
\label{sec:conclusions}
\looseness -1 This work tackles the fundamental challenge of leveraging the representational power of generative models for exploration. We first introduce a formal framework for exploration as entropy maximization over the approximate data manifold implicitly captured by a pre-trained diffusion model. Then, we present an algorithmic principle that guides exploration via density estimation, a challenging task in real-world settings. By exploiting a fundamental connection between entropy and a diffusion model’s score function, we overcome this problem and ensure scalability of the proposed principle for exploration. Building on this, we introduce \AlgNameShort, a sequential fine-tuning algorithm that provably solves the exploration problem, with convergence guarantees grounded in recent advances in mirror flows. Finally, we validate the proposed method on both a conceptual benchmark and a high-dimensional text-to-image task, demonstrating its practical relevance.

\section*{Acknowledgements}
This publication was made possible by the ETH AI Center doctoral fellowship to Riccardo De Santi, and postdoctoral fellowship to Marin Vlastelica. The project has received funding from the Swiss
National Science Foundation under NCCR Catalysis grant number 180544 and NCCR Automation grant agreement 51NF40 180545. 

\section*{Impact Statement}
In this work we provide a method and theoretical analysis of maximum-entropy exploration for diffusion generative models as well as an implemented algorithm.
To the best of our knowledge, this will enable further more applied research in this areas with exciting applications.

\bibliography{biblio}
\bibliographystyle{icml2025}

%%%%%%%%%%%%%%%%%%%%%%%%%%%%%%%%%%%%%%%%%%%%%%%%%%%%%%%%%%%%%%%%%%%%%%%%%%%%%%%
%%%%%%%%%%%%%%%%%%%%%%%%%%%%%%%%%%%%%%%%%%%%%%%%%%%%%%%%%%%%%%%%%%%%%%%%%%%%%%%
% APPENDIX
%%%%%%%%%%%%%%%%%%%%%%%%%%%%%%%%%%%%%%%%%%%%%%%%%%%%%%%%%%%%%%%%%%%%%%%%%%%%%%%
%%%%%%%%%%%%%%%%%%%%%%%%%%%%%%%%%%%%%%%%%%%%%%%%%%%%%%%%%%%%%%%%%%%%%%%%%%%%%%%
\newpage
\appendix
\onecolumn

\section{Proof for \Cref{proposition:omega_is_compact}}
\label{sec:compactness}
Recall the probability flow ODE in \citep[eq. (13)]{song2020score}, which is what we use to generate $p^{pre}_T$ (this is also a common practice in the literature). 
We know that the generative ODE corresponding to the backward SDE \cref{eq:generative_SDE} is written as (suppose $\eta = 1$ for simplicity)
\begin{equation*}
    \mathrm{d}X^\leftarrow_t = \underbrace{- f(X^\leftarrow_t, T-t) + \frac{1}{2}g^2(T-t) s^{pre}(X^\leftarrow_t, T-t)}_{:=v(X^\leftarrow_t, t)} \mathrm{d}t.
\end{equation*}
Here $f$ is defined in the forward process \cref{eq:forward_process}.
Clearly the velocity field $v(x, t)$ is Lipschitz \wrt $x$ due to the assumption that $s^{pre}$ is Lipschitz and the fact that $f$ is linear \wrt $x$.
Consequently, \emph{the flow map induced by the above ODE is Lipschitz}. 
As a result, since $X_0$ is sampled from a truncated Gaussian distribution which has a compact support, $\Omega^{pre} = \mathrm{supp}(p^{pre}_T)$ is also compact for any finite $T$.
\newpage

\section{Proof for \cref{sec:theory1}}
\label{sec:app-theory1}
\exactDerivativeConvergence*
\begin{proof}
    Towards proving this result, we interpret Eq. \eqref{eq:entropy_first_variation} as the first iteration of Algorithm \ref{alg:memd_algorithm}. Hence, to prove the statement, it is sufficient to show that Algorithm \ref{alg:memd_algorithm} after one iteration computes $\pi_1$ inducing density $p^{\pi_1}_T$ such that $\entropy(p_T^*)=\entropy(p_T^\pi)$. We prove this result by leveraging the properties of relative smoothness and relative strong convexity introduced in Sec. \ref{sec:theory1}. 

    The analysis is bases on a classic analysis for mirror descent via relative properties~\citep{lu2018relatively} First, we show the following, where for the sake of using a simple notation, we denote $p_T^{\pi_k}$ by $\mu_k$, and consider an arbitrary density $\mu \in \mP(\Omega_{pre})$.
    \begin{align}
        \entropy(\mu_k) &\leq \entropy(\mu_{k-1}) + \langle \delta \entropy(\mu_{k-1}), \mu_k - \mu_{k-1} \rangle + LD_\Q(\mu_k, \mu_{k-1})\\
        &\leq \entropy(\mu_{k-1}) + \langle \delta \entropy(\mu_{k-1}), \mu - \mu_{k-1} \rangle + LD_\Q(\mu, \mu_{k-1}) - LD_\Q(\mu, \mu_{k})
    \end{align}
    where in the first inequality we have used the $L$-smoothness of $\entropy$ relative to $\Q = \entropy$ as in Definition \ref{definition:relative_properties}, while in the last inequality we have used the three-point property of the Bregman divergence~\citep[Lemma 3.1]{lu2018relatively} with $\phi(\mu) = \frac{1}{L} \langle \delta \entropy(\mu_{k-1}), \mu - \mu_{k-1} \rangle$, $z = \mu_{k-1}$, and $z^+ = \mu_k$. Then, we can derive: 
    \begin{equation}
        \entropy(\mu_k) \leq \entropy(\mu) + (L - \mu)D_\Q(\mu, \mu_{k-1}) - LD_\Q(\mu, \mu_k)
    \end{equation}
    by using the $l$-strong convexity of $\entropy$ relative to $\Q = \entropy$ as in Definition \ref{definition:relative_properties}. By induction, using monotonicity of the iterates and non-negativity of the Bregman divergence as in~\citep{lu2018relatively}, one obtains:
    \begin{align}
        \sum_{k=1}^K \left(\frac{L}{L-l}\right)^k\left( \entropy(\mu_k) - \entropy(\mu) \right) \leq LD_\Q(\mu, \mu_0) - L\left( \frac{L}{L-l} \right)D_\Q(\mu, \mu_k) \leq L D_\Q(\mu, \mu_0)
    \end{align}
    Defining:
    \begin{equation}
        \frac{1}{C_k} = \sum_{k=1}^K \left( \frac{L}{L-l} \right)^k
    \end{equation}
    and rearrenging the terms leads to:
    \begin{equation}
        \entropy(\mu_k) - \entropy(\mu) \leq C_k L D_\Q(\mu, \mu_0) = \frac{\mu D_Q(\mu, \mu_0)}{\left( 1 + \frac{l}{L-l} \right)^l - 1} \label{eq:last_eq_proof}
    \end{equation}
    Given Eq. \ref{eq:last_eq_proof}, the convergence in the statement can be derived using Lemma \ref{lemma:entropy_rel_properties}, and the fact that $\left( 1 + \frac{l}{L-l} \right)^k \geq 1 + \frac{k \mu}{L- \mu}$. Ultimately, $p^{\pi}_T \in \mP(\Omega_{pre}) \forall \alpha > 0$ is trivially due to the fact that $\Omega_{pre}$ is the support of the right element of the Kullback–Leibler divergence in Eq. \ref{eq:entropy_first_variation}.
\end{proof}
\newpage

\section{Proof for \cref{sec:theory2}}
\label{sec:app-theory2}

%**********************************************************************
%***    MACROS: GENERAL
%**********************************************************************
\newcommand{\debug}[1]{#1}

\newcommand{\newmacro}[2]{\newcommand{#1}{{#2}}}		% for shorthand definitions
\newcommand{\newop}[2]{\DeclareMathOperator{#1}{{#2}}}		% for shorthand definitions

\newcommand{\dual}{h}
\newcommand{\run}{k}
\newcommand{\obj}{\entropy}
\newcommand{\state}{\dual}
\newcommand{\curr}[1][\state]{\debug{#1}^{\run}}		% for current value (X by default)
\renewcommand{\next}[1][\state]{\debug{#1}^{\run+1}}		% for current value (X by default)

\newcommand{\efftime}{\tau}
\newcommand{\apt}[2][]{\state^{#1}(#2)}	
\newcommand{\ctime}{t}	
\newcommand{\defeq}{\coloneqq}
\newcommand{\runalt}{n}
\newcommand{\start}{1}

%\newmacro{\step}{\gamma}		% for step-size
\newmacro{\temp}{\eta}		% for learning rate
\newmacro{\points}{\mathcal{Z}}		% for point set
\newmacro{\intpoints}{\points^{\circ}}		%for point set interior
\newmacro{\point}{\dual}		% for generic point
\newmacro{\pointalt}{\alt\point}		% for alternate point
\newcommand{\orbit}[2][]{\point_{#1}(#2)}		% for orbit (x by default)
\newcommand{\dotorbit}[2][]{\dot\point_{#1}(#2)}		% for diff. orbit (x by default)

\newcommand{\vbound}{V}
\newcommand{\lips}{L}
%----------------------------------------------------------------------
%% Time and dynamics
%----------------------------------------------------------------------
%\newmacro{\ctime}{t}		% for continuous time
\newmacro{\ctimealt}{s}		% for dummy continuous time
\newmacro{\cstart}{0}		% for continuous time start

\newmacro{\horizon}{T}		% for horizon

\newmacro{\vecfield}{V}		% for vector field

\newmacro{\signal}{V}		% for signal
\newmacro{\error}{W}		% for error
%%\newmacro{\noise}{U}		% for noise
%\newmacro{\bias}{b}		% for bias
\newmacro{\brown}{W}		% for Wiener process

%----------------------------------------------------------------------
%% Sequences and recursions
%----------------------------------------------------------------------
%\newmacro{\state}{Z}		% for main iterate
\newmacro{\dstate}{Y}		% for other iterate

\newcommand{\avg}[1][\state]{\bar{#1}}		% for averaging (X by default)
\newcommand{\new}[1][\point]{#1^{+}}		% for new iterate (x by default)

\newcommand{\init}[1][\state]{\debug{#1}_{\start}}		% for initial value (X by default)
\newcommand{\afterinit}[1][\state]{\debug{#1}_{\afterstart}}		% for second value (X by default)
\newcommand{\preiter}[1][\state]{\debug{#1}_{\runalt-1}}		% for iterated value (X by default)
\newcommand{\iter}[1][\state]{\debug{#1}_{\runalt}}		% for iterated value (X by default)
\newcommand{\afteriter}[1][\state]{\debug{#1}_{\runalt+1}}		% for iterated value (X by default)
\newcommand{\preprev}[1][\state]{\debug{#1}_{\run-2}}		% for previous value (X by default)
\newcommand{\prev}[1][\state]{\debug{#1}_{\run-1}}		% for previous value (X by default)
\newcommand{\prelead}[1][\state]{\debug{#1}_{\run-1}^{+}}		% for current value (X by default)
\newcommand{\lead}[1][\state]{\debug{#1}_{\run}^{+}}		% for current value (X by default)
\newcommand{\from}{\colon}		% for function definition

\newmacro{\flowmap}{\Theta}		% for (semi)flows
\newcommandtwoopt{\flow}[2][\ctime][\point]{\flowmap_{#1}(#2)}
%----------------------------------------------------------------------
%% Min-Max Problems
%----------------------------------------------------------------------
\newmacro{\minmax}{\Phi}		% for minmax objective

\newmacro{\minvar}{x}		% for minimization variable
\newmacro{\minvaralt}{\alt x}		% for alternate minvar
\newmacro{\minvars}{\mathcal{X}}		% for minvar space

\newmacro{\maxvar}{y}		% for maximization variable
\newmacro{\maxvaralt}{\alt y}		% for alternate maxvar
\newmacro{\maxvars}{\mathcal{Y}}		% for maxvar space

\newmacro{\minsol}{\sol[\minvar]}		% for minimization solution
\newmacro{\maxsol}{\sol[\maxvar]}		% for maximization solution

\newcommand{\sol}[1][\point]{#1^{\ast}}		% for solutions (x by default)
\newcommand{\sols}{\sol[\points]}		% for set of solutions
\newcommand{\sdev}{\sigma}

\newcommand{\as}{\debug{\textpar{a.s.}}\xspace}		% for almost surely
\newmacro{\set}{\mathcal{S}}		% for generic set

\newmacro{\open}{\mathcal{U}}		% for open sets
\newmacro{\closed}{\mathcal{C}}		% for closed sets
\newmacro{\cpt}{\mathcal{K}}		% for compact sets
\newmacro{\nhd}{\mathcal{U}}		% for neighborhoods

%----------------------------------------------------------------------
%% Document layout
%----------------------------------------------------------------------
\newcommand{\afterhead}{.}		
%%\newcommand{\putperiod}[1]{#1.}
%\usepackage{titlesec}
%\titleformat{\subsection}[runin]{\bfseries}{\thesubsection.}{1ex}{}[\afterhead]
%\titleformat{\subsubsection}[runin]{\scshape}{\thesubsubsection.}{1ex}{}[]
%%\titleformat{\section}[runin]{\bfseries}{\thesection.}{1ex}{\afterhead}
%\titlespacing{\subsection}{0pt}{\medskipamount}{1em}
\newcommand{\para}[1]{\paragraph{\textbf{#1\afterhead}}}

%----------------------------------------------------------------------
%% Text and formatting
%----------------------------------------------------------------------
\newcommand{\cf}{cf.\xspace}		% for consistency
\newcommand{\vs}{vs.\xspace}		% for consistency

\newacro{APT}{asymptotic pseudotrajectory}
\newacroplural{APT}[APTs]{asymptotic pseudotrajectories}
\newacro{GD}{gradient dynamics}
\newacro{GF}{gradient flow}
\newacro{ICT}{internally chain-transitive}
\newacro{MDS}{martingale difference sequence}
\newacro{NE}{Nash equilibrium}
\newacroplural{NE}[NE]{Nash equilibria}
\newacro{ODE}{ordinary differential equation}
\newacro{SA}{stochastic approximation}
\newacro{SFO}{stochastic first-order oracle}
\newacro{SG}{stochastic gradient}
\newacro{SP}{saddle-point}
\newacro{WAC}{weak asymptotic coercivity}

\newacro{AH}{Arrow\textendash Hurwicz}
\newacro{BDG}{Burkholder\textendash Davis\textendash Gundy}
\newacro{ConO}{consensus optimization}
\newacro{RM}{Robbins\textendash Monro}
\newacro{KW}{Kiefer\textendash Wolfowitz}
\newacro{GDA}{gradient descent/ascent}
\newacro{SGA}{symplectic gradient adjustment}
\newacro{SGD}{stochastic gradient descent}
\newacro{SGDA}{stochastic gradient descent/ascent}
\newacro{SPSA}{simultaneous perturbation stochastic approximation}
\newacro{ASGDA}[alt-SGDA]{alternating stochastic gradient descent/ascent}
\newacro{SEG}{stochastic extra-gradient}
\newacro{EG}{extra-gradient}
\newacro{PEG}{Popov's extra-gradient}
\newacro{RG}{reflected gradient}
\newacro{OG}{optimistic gradient}
\newacro{PPM}{proximal point method}

\newacro{GAN}{generative adversarial network}
\newacro{NN}{neural network}
\newacro{FTRL}{``follow the regularized leader''}
\newacro{CGD}{Competitive Gradient Descent}
\newacro{wp1}[w.p.$1$]{with probability $1$}

\newcommand{\ites}{\{\curr[\state]\}_{\run\in\mathbb{N}}}
%----------------------------------------------------------------------

\subsection{Proof of \cref{thm:main}}

We restate the theorem for reader's convenience:
\main*

\begin{proof}
\newcommand{\drm}{\mathrm{d}}
To enhance the readability of our proof, we begin by outlining the key steps.

\para{Proof Outline} 

The main idea is to analyze the convergence of the iterates $\{p_T^k\}_{k\in \mathbb{N}}$ generated by \cref{alg:memd_algorithm} by relating them to a corresponding \emph{continuous-time} dynamical system. Specifically, we define the initial dual variable as
%to a \emph{continuous-time} dynamics. To this end, let
$$\dual_0 = \delta \entropy (p_{pre}) = -\log p_{pre},$$
and consider the following system:
\begin{equation}\label{eq:MF}
\tag{MF}
    \begin{cases}
    \dot{\dual}_t = \delta \entropy(p_t) \\
    p_t = \delta (-\entropy)^\star(\dual_t)
    \end{cases} \equiv \quad
    \begin{cases}
    \dot{\dual}_t = - \log p_t \\
    p_t = \frac{ e^{\dual_t}}{\int_\Omega e^{\dual_t}}.
    \end{cases}
\end{equation}
Here, $ (-\entropy)^\star(\dual) \coloneqq \log\int_\Omega e^\dual$ is the Fenchel dual of the entropy function \cite{hsieh2019finding, hiriart2004fundamentals}. 

To bridge the gap between discrete and continuous-time analysis, we construct a continuous-time interpolation of the discrete iterates $\ites$. Let $(\curr \coloneqq \delta \entropy(p^{\run}_T))_{\run\in\mathbb{N}}$ be the sequence of the corresponding \emph{dual variables}. We introduce the notion of an ``effective time'' $\curr[\efftime]$, defined as: $$\curr[\efftime] \defeq \sum_{\runalt=\start}^{\run} \step_\runalt,$$
which represents the cumulative time elapsed up to the $\run$-th iteration of the discrete-time process $\curr[\state]$ using step-size $\step_\run$. Using $\curr[\efftime]$, we define the \emph{continuous-time interpolation} $\apt{\ctime}$ of $\curr$ as follows:
\begin{equation}
\tag{Int}
\label{eq:interpolation}
\apt{\ctime}
	\defeq \curr
		+ \frac{\ctime - \curr[\efftime]}{\next[\efftime] - \curr[\efftime]} (\next - \curr).
\end{equation}

Intuitively, the convergence of our algorithm follows if the following two conditions hold:

\newtheorem{informalassumption}{Informal Assumption}
\begin{informalassumption}[Closeness of discrete and continuous times]
\label{iasm:dis2cont}
The interpolated process \eqref{eq:interpolation} asymptotically approaches the continuous-time dynamics in \eqref{eq:MF} as $\run \to \infty$.
\end{informalassumption}

\begin{informalassumption}[Convergence of continuous-time dynamics]
\label{iasm:sol}
The trajectory of \eqref{eq:MF} converges to the \textbf{optimal solution} of the maximum entropy problem \eqref{eq:opt_problem}.
\end{informalassumption}

To formalize the above intuition, we leverage the \emph{stochastic approximation} framework of \citet{benaim2006dynamics,mertikopoulos2024unified,karimi2024sinkhorn}, outlined as follows.

First, to precisely state \cref{iasm:dis2cont}, we introduce a measure of ``closeness'' between continuous orbits. Let $\points$ denote the space of integrable functions on $\Omega$ (viewed as the dual space of probability measures; see \cite{halmos2013measure}), and define the \emph{flow} $\flowmap\from\R_{+}\times\points\to\points$ associated with \eqref{eq:MF}. That is, for an initial condition $\point_0 = \point \in \points$, the function $\flowmap$ describes the orbit of \eqref{eq:MF} at time $\ctime\in\R_{+}$. 

We then define the notion of ``asymptotic closeness'' as follows:

\begin{definition}
\label{def:APT}
We say that $\apt{\ctime}$ is an \acdef{APT} of \eqref{eq:MF} if, for all $\horizon > 0$, we have:
\begin{equation}
\label{eq:APT}
\lim_{\ctime\to\infty}
	\sup_{0 \leq \ctimealt \leq \horizon}
		\norm{\apt{\ctime + \ctimealt} - \flow[\ctimealt][\apt{\ctime}]}_\infty
	= 0.
\end{equation}
\end{definition}

This comparison criterion, introduced by \citet{benaim1996asymptotic}, plays a central role in our analysis. Intuitively, it states that $\apt{\ctime}$ eventually tracks the flow of \eqref{eq:MF} with arbitrary accuracy over arbitrarily long time windows. Consequently, if \eqref{eq:interpolation} is an \ac{APT} of \eqref{eq:MF}, we can reasonably expect its behavior— and thus that of $\{\curr[\state]\}_{\run\in\mathbb{N}}$— to closely follow \eqref{eq:MF}. 

The precise connection is established by \citet{benaim1996asymptotic} through the concept of \acdef{ICT} sets:

\begin{definition}[\citealp{benaim1996asymptotic, benaim2006dynamics}]
\label{def:ICT}
Let $\set$ be a nonempty compact subset of $\points$. Then:
\begin{enumerate}
\item
$\set$ is \emph{invariant} if $\flow[\ctime][\set] = \set$ for all $\ctime\in\R$.
\item \label{item:b}
$\set$ is \emph{attracting} if it is invariant and there exists a compact neighborhood $\cpt$ of $\set$ such that $\lim_{\ctime\to\infty} \dist(\flow,\set) = 0$ uniformly for all $\point\in\cpt$.
\item
$\set$ is an \textbf{\acdef{ICT}} set if it is invariant and $\flowmap\vert_{\set}$ admits no proper attractors within $\set$.
\end{enumerate}
\end{definition}

The significance of \ac{ICT} sets lies in \citep[Theorem~5.7]{benaim2006dynamics}:

\begin{theorem}[\acp{APT} converge to \ac{ICT} sets]
\label{thm:apt2ict}
Let $\apt{\ctime}$ be a precompact \acl{APT} generated by $\{\curr[\state]\}_{\run\in\mathbb{N}}$ for the flow associated with the continuous-time system \eqref{eq:MF}. Then, almost surely, $\curr[\state] \to \set$, where $\set$ is an \ac{ICT} set of \eqref{eq:MF}.
\end{theorem}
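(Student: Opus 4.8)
The plan is to take the \ac{ICT} set to be the \emph{limit set} of the interpolated trajectory,
$$\set \defeq \bigcap_{\ctime \geq 0} \overline{\{\apt{\ctimealt} : \ctimealt \geq \ctime\}},$$
and to verify that it is a nonempty, compact, invariant set that is internally chain transitive. Since the interpolation \eqref{eq:interpolation} coincides with the dual iterates $\curr[\state]$ at the effective times $\curr[\efftime]$, every limit point of $\{\curr[\state]\}_{\run}$ lies in $\set$, so establishing these properties immediately yields $\curr[\state] \to \set$. Precompactness of $\apt{\cdot}$ gives at once that $\set$ is nonempty and compact, and continuity of the (piecewise-linear) interpolation gives that $\set$ is connected. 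Because the \ac{APT} property \eqref{eq:APT} holds only on a probability-one event — it is the almost-sure conclusion of the stochastic-approximation analysis under \crefrange{asm:support}{asm:approximate} — every conclusion below is almost sure.

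First I would show $\set$ is invariant under the flow $\flowmap$ of \eqref{eq:MF}. Fix $u \in \set$ and choose times $\ctime_n \to \infty$ with $\apt{\ctime_n} \to u$. For fixed $\ctimealt \geq 0$, the \ac{APT} estimate \eqref{eq:APT} gives $\norm{\apt{\ctime_n + \ctimealt} - \flow[\ctimealt][\apt{\ctime_n}]}_\infty \to 0$, while continuous dependence of the flow on its initial datum (valid on the relevant compact region by \cref{asm:precompact}) gives $\flow[\ctimealt][\apt{\ctime_n}] \to \flow[\ctimealt][u]$; hence $\apt{\ctime_n + \ctimealt} \to \flow[\ctimealt][u]$, so $\flow[\ctimealt][u] \in \set$. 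Forward invariance follows, and full invariance $\flow[\ctime][\set] = \set$ for all $\ctime \in \R$ is then obtained by a compactness/diagonal extraction that inverts time, using the group property of $\flowmap$.

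The crux — and the step I expect to be the main obstacle — is \emph{internal chain transitivity}: for any $u, v \in \set$ and any $\epsilon, \horizon > 0$ one must exhibit an $(\epsilon,\horizon)$-chain from $u$ to $v$ whose nodes all lie in $\set$. The \ac{APT} property is exactly the tool for this. Since $u, v$ are limit points, there is an arbitrarily late window $[a,b]$ with $\apt{a}$ near $u$ and $\apt{b}$ near $v$; partitioning $[a,b]$ into sub-windows of length in $[\horizon, 2\horizon]$ and applying \eqref{eq:APT} on each shows that the interpolation stays within $\epsilon/2$ of a genuine flow orbit over every sub-window, so the partition nodes form an $(\epsilon/2,\horizon)$-pseudo-orbit of $\flowmap$. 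The delicate part is \emph{internality}: these nodes lie only \emph{near} $\set$, so I would project each onto a nearest point of the compact set $\set$ and absorb the projection error into the tolerance, using that for $a$ large enough every visited point is within $\epsilon/2$ of $\set$. This produces a bona fide $(\epsilon,\horizon)$-chain inside $\set$ connecting $u$ to $v$. Equivalently, this rules out any proper attractor of $\flowmap\vert_{\set}$, matching the characterization in \cref{def:ICT}: were $A \subsetneq \set$ a proper attractor, its strictly forward-invariant trapping neighborhood would, by the same tracking estimate, eventually capture the interpolation and force convergence into $A$, contradicting that points of $\set \setminus A$ are also limit points.

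Collecting these facts, I would conclude that $\set$ is a compact, invariant, internally chain-transitive set containing all limit points of $\{\curr[\state]\}_{\run}$, whence $\curr[\state] \to \set$ almost surely. This reproduces, in outline, the classical result \citep[Theorem~5.7]{benaim2006dynamics}, the only genuinely technical ingredient being the internality argument of the previous paragraph.
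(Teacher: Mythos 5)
Your proposal is essentially correct, but note first that the paper does not prove \cref{thm:apt2ict} at all: it is imported as a black box from the stochastic-approximation literature, cited as \citep[Theorem~5.7]{benaim2006dynamics}, and then invoked once in the proof of \cref{thm:main}. What you have written is a from-scratch reconstruction of the classical argument behind that cited result, and it is faithful to it in structure: take $\mathcal{S}$ to be the omega-limit set of the interpolated trajectory, get nonemptiness and compactness from precompactness, invariance from the tracking property \eqref{eq:APT} combined with continuous dependence of the flow on initial data, and internal chain transitivity either by assembling $(\epsilon,T)$-chains from the tracking estimate with a projection-onto-$\mathcal{S}$ step to ensure internality, or dually by showing that a proper attractor of the restricted flow would trap the trajectory in its basin and contradict that every point of $\mathcal{S}$ is a limit point; the latter is exactly Ben\"aim's route. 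Two caveats, neither fatal: (i) \cref{asm:precompact} gives precompactness of the dual iterates, not regularity of the flow map, so the continuous dependence you invoke for invariance and for the chain construction is really a standing well-posedness hypothesis on the semiflow generated by \eqref{eq:MF} on the relevant subset of $L_\infty$ (part of Ben\"aim's framework), and in this infinite-dimensional setting it deserves an explicit statement rather than a cross-reference to that assumption; (ii) full invariance $\Phi_t(\mathcal{S})=\mathcal{S}$ for all $t$ requires constructing entire orbits through points of $\mathcal{S}$ via your diagonal extraction, since the semiflow need not be invertible, so the appeal to a ``group property'' is too quick, though the extraction argument you sketch is the standard fix. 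In short: where the paper buys this theorem by citation, your outline rederives it correctly and shows why the citation is legitimate; the only genuinely technical ingredient, the internality step, is handled as in the original proof.
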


By \cref{thm:apt2ict}, establishing \cref{thm:main} reduces to proving the following two statements:
\begin{enumerate}
\item The iterates $\ites$ of \cref{alg:memd_algorithm} generate a precompact \ac{APT} of \eqref{eq:MF}.
\item The unique \ac{ICT} set of \eqref{eq:MF} is the solution to the optimization problem \eqref{eq:opt_problem}.
\end{enumerate}

These results provide the rigorous counterpart to \crefrange{iasm:dis2cont}{iasm:sol}. The proof below proceeds by formally establishing each of these points.

\para{The \ac{ICT} set of \eqref{eq:MF} is the solution to \eqref{eq:opt_problem}}
By the definition \eqref{eq:MF}, we can easily see that:
\begin{align}
    \dot{p}_t &= p_t \dot{\dual}_t - \frac{e^{\dual_t}}{\int_\Omega e^{\dual_t}} \cdot \frac{\int_\Omega\dot{\dual}_t\cdot e^{\dual_t} }{\int_\Omega e^{\dual_t}} \\
    \label{eq:hold}
    &= p_t\left( \dot{\dual}_t - \mathbb{E}_{p_t} \dot{\dual}_t \right).
\end{align}
We then compute:
\begin{align}
   - \frac{\mathrm{d}}{\mathrm{d}t}\entropy(p_t) &= - \langle\delta \entropy(p_t), \dot{p}_t\rangle \\
    &= \langle  \log p_t,  p_t\left( \dot{\dual}_t - \mathbb{E}_{p_t} \dot{\dual}_t \right)\rangle \quad \quad \textup{by \eqref{eq:hold}} \\
    &= \int_\Omega p_t \log p_t \dot{\dual}_t - \int_\Omega p_t \log p_t \cdot \int_\Omega p_t \dot{\dual}_t \\
    &= - \int_\Omega p_t  (\log p_t)^2 -  \left( \int_\Omega p_t \log p_t \right)^2 \quad \quad \textup{by \eqref{eq:MF}} \\
    & = - \left(   \mathbb{E}_{X_t\sim p_t}(\log p_t(X_t) )^2  -  \left( \mathbb{E}_{X_t\sim p_t} \log p_t(X_t) \right)^2  \right) \\
    &\leq 0
    \label{eq:hold1}
\end{align}
by Jensen's inequality. Also, note that the inequality is strict if $\dual_t$ is not constant, i.e., if $p_t$ is not uniform on $\Omega$. 

In short, we established in \eqref{eq:hold1} that $\entropy(\cdot)$ serves as a \emph{Lyapunov function} for the continuous-time system \eqref{eq:MF}. Since $\entropy(\cdot)$ is \emph{strictly} concave, the only \ac{ICT} set is the singleton $\{p_T^*\}$, where $p_T^*$ represents the uniform (and hence entropy-maximizing) measure on $\Omega$ \citep[Proposition.~6.4]{benaim2006dynamics}.

\para{\cref{alg:memd_algorithm} generates an \ac{APT}}

Let $(p^{\run}_T)_{\run\in\mathbb{N}}$ be the sequence of measures on $\Omega$ generated by \cref{alg:memd_algorithm} with the oracle \LinearFineTuningSolver, and recall that its dual variables are given by $(\curr \coloneqq \delta \entropy(p^{\pi_\run}_T))_{\run\in\mathbb{N}}$. Also, recall the corresponding continuous-time interpolation \eqref{eq:interpolation}.

\Cref{asm:support} ensures that each dual variable $\curr$ is a well-defined function on $\Omega$ after some iteration $j$, while \cref{asm:precompact} guarantees the precompactness of $\apt{\cdot}$. Furthermore, under \cref{asm:approximate}, standard arguments (see, e.g., \textbf{Proposition 4.1} of \cite{benaim2006dynamics} or \cite{karimi2024sinkhorn}) establish that $\apt{\cdot}$ generates an \ac{APT} of the continuous-time flow defined by \eqref{eq:MF}. Finally, \cref{thm:apt2ict} ensures that $\ites$ converges almost surely to an \ac{ICT} set of \eqref{eq:MF}, which we have already shown to contain only $\{p_T^*\}$.

Therefore, applying the theory of \cite{hsieh2021limits, karimi2024sinkhorn}, we conclude that, almost surely,
\begin{align}
\lim_{\run \to \infty} \dual^\run = \lim_{\run \to \infty} \delta \obj(p^{\run}_T) 
=\lim_{\run \to \infty} -\log  p^{\run}_T 
= \delta \obj(p_T^*) \quad\quad \textup{in } L_\infty.
\label{eq:hold3}
\end{align}
Since $\Omega$ is compact, \eqref{eq:hold3} implies that, for any smooth test function $\psi$ on $\Omega$, $\langle p^{\run}_T, \psi \rangle \to \langle p_T^*, \psi \rangle  $, which completes the proof.
\end{proof}
\newpage

\section{Detailed Example of Algorithm Implementation}
\label{sec:implementation}
\subsection{Pseudocode for implementation of Eq. \eqref{eq:entropy_first_variation}}
\looseness -1 For the sake of completeness, in the following we present the pseudocode for a possible implementation of a \LinearFineTuningSolver via a first-order optimization method, used to solve \eqref{eq:entropy_first_variation}, as well as within \AlgNameShort. In particular, we present the same implementation we use in Sec. \ref{sec:experiments}, based on Adjoint Matching \citep{domingo2024adjoint}, which captures the linear fine-tuning  via a stochastic optimal control problem and solves it via regression.

In the following, we adopt the notation from the Adjoint Matching paper \citep[Apx E.4]{domingo2024adjoint}. We denote the pre-trained noise predictor by $\epsilon^{pre}$, the fine-tuned one as $\epsilon^{\mathrm{finetuned}}$, and with $\Bar{\alpha}$ the cumulative noise schedule, as used by \citet{ho2020denoising}. The complete algorithm is presented in Algorithm \ref{alg:step_implemented}. First, notice that given a noise predictor $\epsilon$ (as Defined in Sec. \ref{sec:background}) and a cumulative noise schedule $\bar{\alpha}$, one can define the score $s$ as follows \citep{song2019generative}:  
\begin{equation}
    s(x,t) \coloneqq -\frac{\epsilon(x,t)}{\sqrt{1-\bar{\alpha}_t}}
\end{equation}

\begin{algorithm}[H]
    \caption{\LinearFineTuningSolver (Implementation based on Adjoint Matching \citep{domingo2024adjoint})}
    \label{alg:step_implemented}
        \begin{algorithmic}[1]
        \INPUT{ $N: $ number of iterations, $\epsilon^{pre}: $ pre-trained noise predictor, $\alpha$ regularization coefficient, $m:$ trajectories batch size, $\nabla f$: reward function gradient}
        \STATE{\textbf{Init:} $\epsilon^{\mathrm{finetuned}} \coloneqq \epsilon^{pre}$ with parameter $\theta$}
        \FOR{$n=0, 2, \hdots, N-1$}
            \STATE{Sample $m$ trajectories $\{X_t\}_{t=1}^T$ according to DDPM \citep{song2020score}}, \eg sample $\epsilon_t \sim \mathcal{N}(0,I), \; X_0 \sim \mathcal{N}(0,I)$
            \[
            X_{t+1}
            = 
            \sqrt{\frac{\bar{\alpha}_{t+1}}{\bar{\alpha}_t}}
            \left(
              X_t
              - \frac{
                  1 - \frac{\bar{\alpha}_t}{\bar{\alpha}_{t+1}}
                }{
                  \sqrt{1 - \bar{\alpha}_t}
                }
                \,\epsilon^{\mathrm{finetuned}}(X_t,t)
            \right)
            + 
            \sqrt{\frac{1 - \bar{\alpha}_{t+1}}{1 - \bar{\alpha}_t}
            \left(
              1 - \frac{\bar{\alpha}_t}{\bar{\alpha}_{t+1}}
            \right)}
            \epsilon_t
            \]
        Use reward gradient: $$\Tilde{a}_T = \nabla f(X_T)$$
        For each trajectory, solve the lean adjoint ODE, see \citep[Eq. 38-39]{domingo2024adjoint}, from $t=T$ to $0$:
        \[
        \bar{a}_{k}
        = 
        \bar{a}_{t+1}
        + 
        \bar{a}_{t+1}^\top 
        \nabla_{X_t}
        \left(
          \sqrt{\frac{\bar{\alpha}_{t+1}}{\bar{\alpha}_{t}}}
          \Bigl(
            X_t 
            \;-\;
            \frac{
              1 - \frac{\bar{\alpha}_t}{\bar{\alpha}_{t+1}}
            }{
              \sqrt{\,1 - \bar{\alpha}_t\,}
            }
            \,\epsilon^{\mathrm{pre}}(X_t,t)
          \Bigr)
          \;-\;
          X_t
        \right)
        \]
        Where $X_t$ and $\Tilde{a}_t$ are computed without gradients, \ie $X_t = \texttt{stopgrad}(X_t), \Tilde{a}_t = \texttt{stopgrad}(\Tilde{a}_t)$.
        For each trajectory compute the Adjoint Matching objective \citep[Eq. 37]{domingo2024adjoint}:
        \[
        \mathcal{L}(\theta)
        = \sum_{t=0}^{T-1}
        \biggl\|
          \sqrt{\frac{\bar{\alpha}_{t+1}}{\bar{\alpha}_t\,\bigl(1-\bar{\alpha}_{t+1}\bigr)}}
          \Bigl(1 - \frac{\bar{\alpha}_t}{\bar{\alpha}_{t+1}}\Bigr)\!
          \Bigl(\epsilon^{\mathrm{finetuned}}(X_t,t)\;-\;\epsilon^{\mathrm{pre}}(X_t,t)\Bigr)
          \;-\;
          \sqrt{\frac{1-\bar{\alpha}_{t+1}}{1-\bar{\alpha}_t}}
          \Bigl(1 - \frac{\bar{\alpha}_t}{\bar{\alpha}_{t+1}}\Bigr)
          \,\bar{a}_t
        \biggr\|^2
        \]
        Compute the gradient $\nabla_\theta \mathcal{L}(\theta)$ and update $\theta$.
        \ENDFOR
        \OUTPUT Fine-tuned noise predictor $\epsilon^{\mathrm{finetuned}}$
        \end{algorithmic}
\end{algorithm}
\newpage

\section{Experiment Details}
\label{sec:experiment-details}
In this section we provide further details on the experiments.

\paragraph{Illustrative setting.} Pre-training was performed by standard denoising score-matching and uniform samples, namely $10K$, from the two distributions in Fig. \ref{fig:toy_ex_a}. For fine-tuning, in this experiment we ran \AlgNameShort for $6000$ gradient steps in total, for $K=1,2,3,4$.
Notably, k=$1$ amounts to having a fixed reward for fine-tuning, $-\log p_T^{pre}(x)$.
Each round of \AlgNameShort performs $6000/K$ for the particular experiment.
In this way we observe the effect of having more rounds of the mirror descent scheme, with same number of gradient updates.
Since we utilize Adjoint Matching~\citep{domingo2024adjoint} for the linear solver in \cref{alg:memd_algorithm}, we perform an iteration of \cref{alg:step_implemented} by first sampling $20$ trajectories via DDPM of length $400$ that are used  for solving the lean adjoint ODE with the reward $-\lambda \nabla \log p_T(x)$ and $\lambda=0.1$.
Subsequently we perform $2$ stochastic gradient steps by the Adam optimizer with batch size $2048$, initialized with learning rate $4 \times 10^{-4}$.
For the density plots in \cref{fig:toy_ex} we sampled $80000$ points with $100$ DDPM steps.
To obtain \cref{fig:toy_ex_d}, we computed a Monte-Carlo estimate of $\entropy(p_T^{\pi_k})$ with an approximation of $\log p_T^{\pi_k}(x)$ resulting from the instantaneous change of variables and divergence flow equation,
\begin{equation}
    \log p(x) = \log p_0(x) + \int_0^T \nabla \cdot f(x_t, t) dt,
\end{equation}
where $f$ is the velocity of the probability-flow ODE for the variance-preserving forward process of the diffusion.

\paragraph{Text-to-image architecture design.} For obtaining \cref{fig:images_architecture}, similarly as in the illustrative example we used \cref{alg:step_implemented} as the linear solver.
At each iteration of \cref{alg:step_implemented} we sample $4$ trajectories of length $60$ by DDPM, conditioned on the prompt on top of which we perform $10$ Adam steps with initial learning rate $3 \times 10^{-7}$ and batch size $8$.
Each iteration of \cref{alg:memd_algorithm} entails $20$ iterations of \cref{alg:step_implemented}.
We ran \cref{alg:memd_algorithm} for $K=3$.
For this experiment, we used $\lambda=0.1$.

\paragraph{Text-to-image evaluation.} Evaluating the entropy of $p_T^{\pi_k}$ is computationally prohibitive for the case of the high-dimensional latent of SD-1.5.
Consequently, we opted for proxy metrics to quantify how much does the distribution change with increase of $\pi_k$, the FID score for distributional distance and CLIP score for semantic alignment.
In addition, we computed the cross-entropy in \cref{tab:fid-clip} between the Gaussians in the Inception-v3 feature space, where the Gaussians were fitted the same way as in computing the FID score.
The FID score, cross-entropy and CLIP score have been computed on $3000$ samples from respective conditional distributions.

\section{Diversity Measures}
\label{app:diversity-measures}

Picking a proper diversity measure for the space of images is non-trivial.
For completeness, we provide here additional results for the Vendi score~\citep{friedman2022vendi}, which is a kernel diversity metric.
This however again comes with a particular choice of kernel and feature map $\phi$.
We picked the standard RBF kernel,
\begin{equation}
   \textbf{k}(x,x') = \exp(-\gamma\|\phi(x) - \phi(x') \|),
\end{equation}
where we set $\gamma = 50$ and pick the Inception-v3 model feature space.
Given computation constraints, we re-ran another experiment with the prompt "A creative architecture." and evaluated the sampled for each MD iteration with $300$ images (samples are given in \cref{fig:images_vendi_run}) with $\lambda=0.1$ and initial learning rate for Adam being $10^{-6}$ with $2$ MD steps and $100$ gradient steps per MD iteration.
The reported numbers for Vendi and CLIP scores in \cref{tab:vendi-result} are computed over $3$ seeds.

\begin{table}[ht!]
\centering
\begin{tabular}{lccc}
\toprule
 & $\mathbf{p_T^{pre}}$ & \textbf{\AlgNameShort 1} & \textbf{\AlgNameShort 2} \\
\midrule
\textbf{Vendi}  & $1.53$ & $1.7\pm 0.08$ & $1.63 \pm 0.07$   \\
\textbf{CLIP} & $0.22$  & $0.22 \pm 0.01$ & $0.22\pm 0.01$  \\
\bottomrule
\end{tabular}

\caption{FID, CLIP and cross-entropy evaluation of $p_T^{pre}$ and $p_T^{\pi_k}$. For $k=1,2$, \AlgNameShort achieves larger distance to $p_T^{pre}$ while preserving high CLIP score.}\vspace{-4pt}
\label{tab:vendi-result}
\end{table}

\newlength{\imww}
\setlength{\imww}{0.12\textwidth}
\begin{figure*}[!t]
    \centering
    \includegraphics[width=1\textwidth]{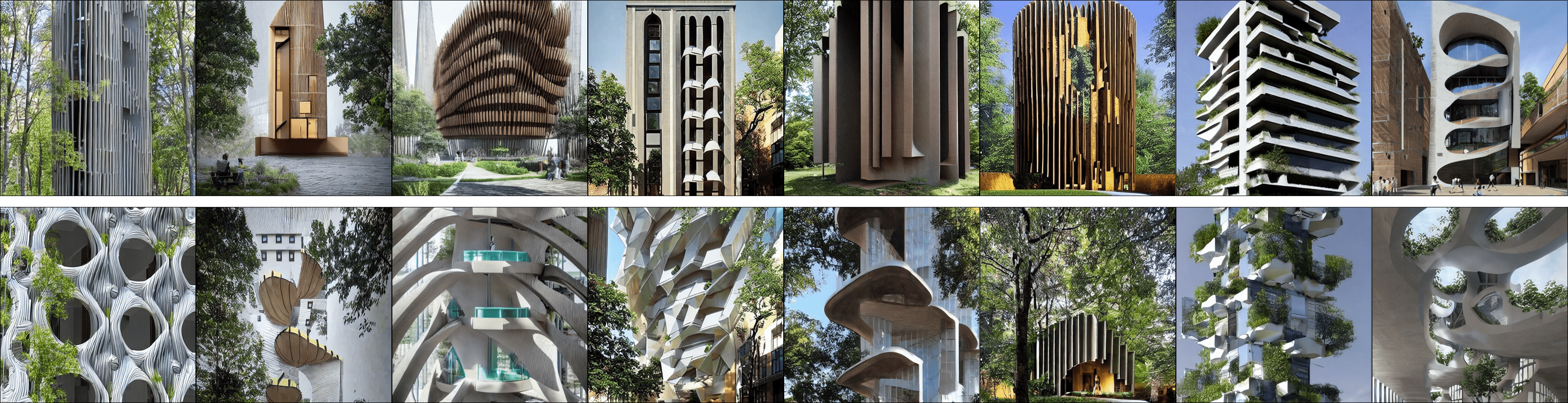}\vspace{-0pt}
    \caption{\looseness -1 Generated images from $\pi^{pre}$ (top)  and $\pi_3$ (bottom) for a fixed set of initial noisy samples using the prompt "A creative architecture.".}
    \label{fig:images_vendi_run}\vspace{-0pt}
\end{figure*}

\section{Additional Text-to-Image Results}
\label{sec:extra_images}
In the following, we present additional experimental results obtained via the same text-to-image pre-trained diffusion model introduced in Sec. \ref{sec:experiments}, and with experimental details as presented within Sec. \ref{sec:experiment-details}.  
\newpage
\begin{figure*}
    \centering
    \includegraphics[width=0.95\textwidth]{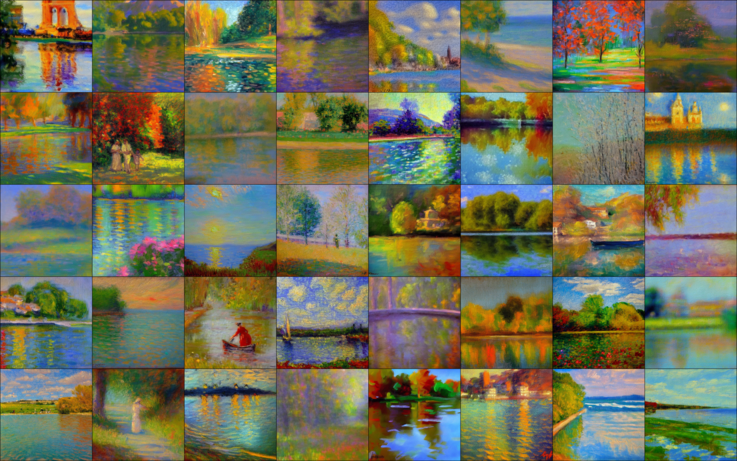}
    \caption{\looseness -1 Generated images from $\pi^{pre}$ with prompt "A creative impressionist painting."}
    \label{fig:images_paintings}
\end{figure*}

\begin{figure*}
    \centering
    \includegraphics[width=0.95\textwidth]{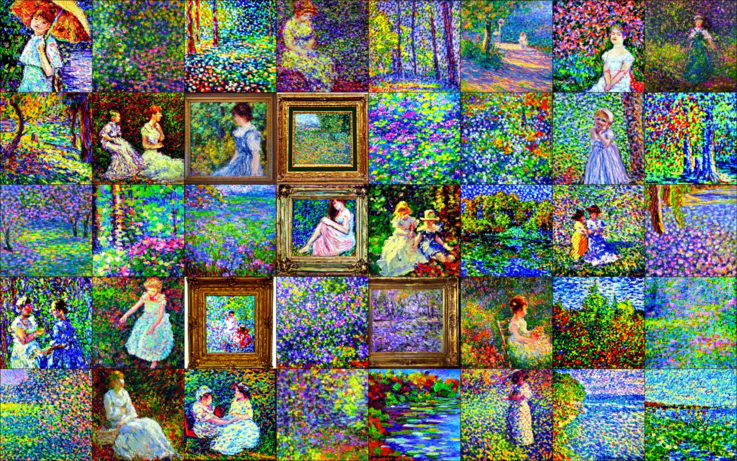}
    \caption{\looseness -1 Generated images obtained via fine-tuning of $\pi^{pre}$ via \AlgNameShort with prompt "A creative impressionist painting."}
    \label{fig:images_painting_fine}
\end{figure*}

\begin{figure*}
    \centering
    \includegraphics[width=0.95\textwidth]{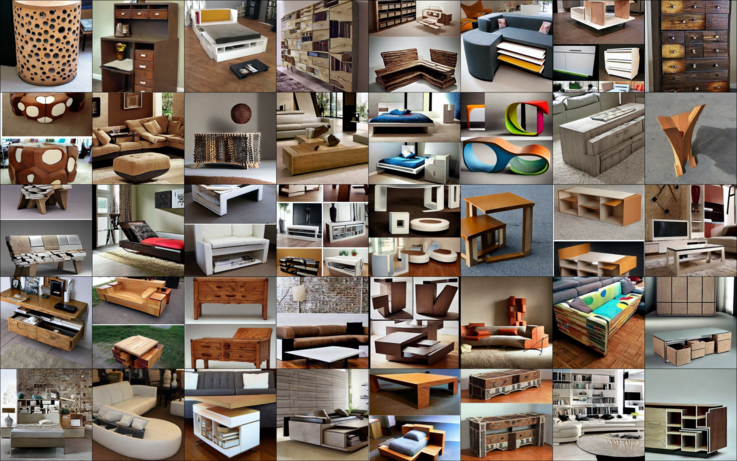}
    \caption{\looseness -1 Generated images from $\pi^{pre}$ with prompt "Creative furniture."}
    \label{fig:images_furniture}
\end{figure*}

\begin{figure*}
    \centering
    \includegraphics[width=0.95\textwidth]{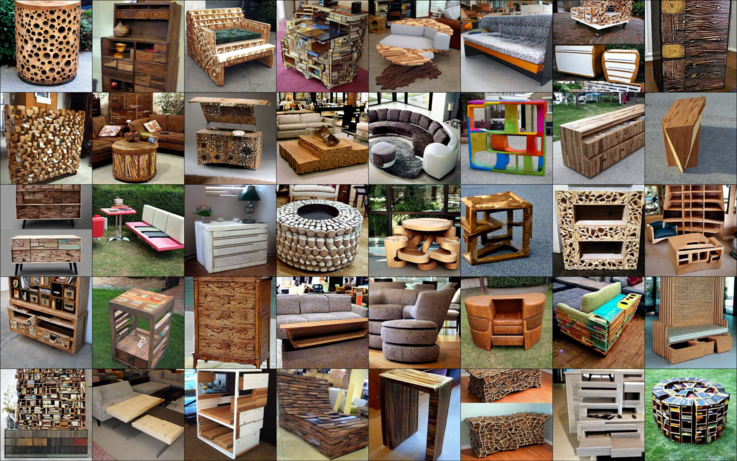}
    \caption{\looseness -1 Generated images obtained via fine-tuning of $\pi^{pre}$ via \AlgNameShort with prompt "Creative furniture."}
    \label{fig:images_furniture_fine}
\end{figure*}

\begin{figure*}
    \centering
    \includegraphics[width=0.95\textwidth]{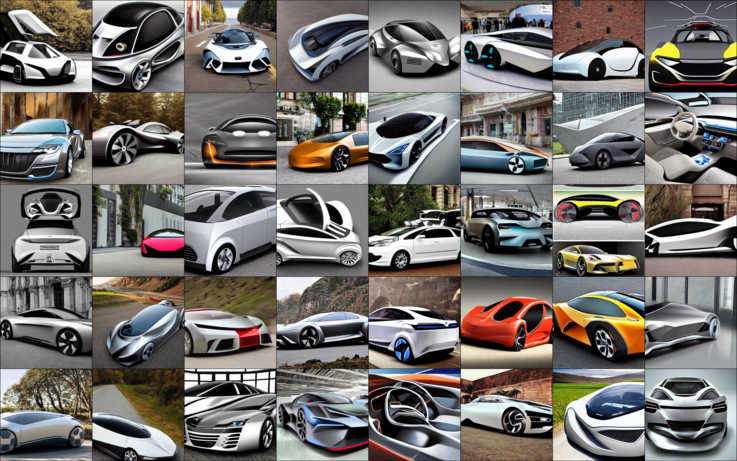}
    \caption{\looseness -1 Generated images from $\pi^{pre}$ with prompt "An innovative car design."}
    \label{fig:images_car}
\end{figure*}

\begin{figure*}
    \centering
    \includegraphics[width=0.95\textwidth]{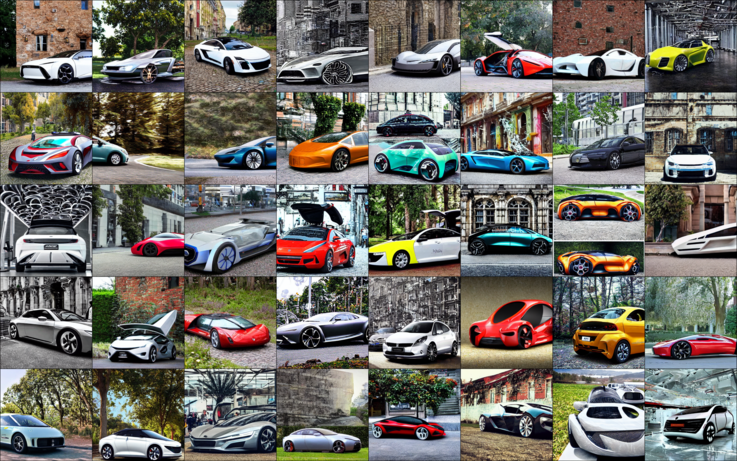}
    \caption{\looseness -1 Generated images obtained via fine-tuning of $\pi^{pre}$ via \AlgNameShort with prompt "An innovative car design."}
    \label{fig:images_car_fine}
\end{figure*}
\newpage

%%%%%%%%%%%%%%%%%%%%%%%%%%%%%%%%%%%%%%%%%%%%%%%%%%%%%%%%%%%%%%%%%%%%%%%%%%%%%%%
%%%%%%%%%%%%%%%%%%%%%%%%%%%%%%%%%%%%%%%%%%%%%%%%%%%%%%%%%%%%%%%%%%%%%%%%%%%%%%%

\end{document}